%% file: m2g_pca_arXiv.tex
\documentclass[a4paper,11pt]{article}

\usepackage{amsmath,amssymb,amsthm,bbm,bm}  
\usepackage{authblk}  
\usepackage[british]{babel}
\usepackage{balance}
\usepackage[utf8]{inputenc}
\usepackage[backref=true,sorting=none,doi=false,isbn=false,giveninits=true,style=nature,url=false]{biblatex} 
\usepackage{booktabs} 
\usepackage{caption}
\usepackage{color}
\usepackage{comment}
\usepackage{dsfont}
\usepackage{enumitem}
\usepackage[T1]{fontenc}
\usepackage[top=2.5cm, bottom=2.5cm, left=2.5cm, right=2.5cm]{geometry}
\usepackage{graphicx}
\usepackage{graphbox}       
\PassOptionsToPackage{hyphens}{url}  
\usepackage[colorlinks = true,
            linkcolor = OliveGreen,
            urlcolor  = blue,
            citecolor = MidnightBlue,
            anchorcolor = blue]{hyperref}
\usepackage{cleveref}  
\usepackage{csquotes}  
\usepackage[mono=false]{libertine}  
\usepackage{mathtools}
\usepackage{microtype}      
\usepackage{multirow}    
\usepackage{nicefrac}
\usepackage{subfig} 
\usepackage{thmtools}
\usepackage{wrapfig}
\usepackage[dvipsnames]{xcolor}

\usepackage{commands}

\allowdisplaybreaks

\newcommand{\citet}[1]{\textcite{#1}}
\newcommand{\citep}[1]{\cite{#1}}

\title{Memorisation, convergence and generalisation\\ in generative models}

\date{\today}

\author[1]{Antoine Maillard}
\author[2]{Sebastian Goldt\thanks{sgoldt@sissa.it}}
\affil[1]{INRIA Paris \& DI ENS, PSL University, Paris, France}
\affil[2]{International School of Advanced Studies (SISSA), Trieste, Italy}

\begin{document}

\maketitle

\begin{abstract}
    \noindent \input{m2g_pca_abstract.tex}
\end{abstract}

\section{Introduction}

\input{m2g_pca_main.tex}

\section*{Acknowledgements}

\input{m2g_pca_acknowledgements}

\newpage
\printbibliography

\newpage
\appendix

\onecolumn
\numberwithin{equation}{section}
\numberwithin{figure}{section}

\begin{center}
  \LARGE SUPPLEMENTAL MATERIAL
\end{center}

\input{m2g_pca_supp.tex}

\end{document}

%% file: m2g_pca_abstract.tex
Generative neural networks learn how to produce highly realistic images from a
large, but finite number of examples -- or do they simply memorise their
training set?
To settle this question, Kadkhodaie, Guth, Simoncelli and Mallat
(ICLR ’24)~\citep{kadkhodaie2024generalization} trained diffusion models
independently on disjoint subsets of a dataset and showed that they converge to
nearly the same density when the number of training images is large enough.
This result raises two basic questions: how much data do you need for convergence, and
what does convergence capture about learning the data distribution? 
Here, we address these questions by providing an exact analytical
characterisation of the transition from memorisation to generalisation in linear
generative models.
We find that these models memorise at small load, while convergence emerges
continuously when the number of samples is linear in the input dimension.
Strikingly, we find that convergence is insensitive to recovery of the principal
latent factors of the data, which are recovered in a sharp transition.
After extending our approach to data with power-law spectra, we find the same
distinction between convergence and latent recovery in our experiments with
convolutional denoisers and in the data of
\citet{kadkhodaie2024generalization}. 
We thus show that generalisation in generative models decomposes into at least
two distinct objectives: matching the bulk of the data distribution and
recovering the principal latent factors. These objectives correspond to two different
distances between true and learnt data distribution, and only the first one is
captured by convergence.

%% file: m2g_pca_main.tex
The success of modern \emph{generative} neural networks poses a new
challenge for the theory of generalisation in neural networks. The first challenge
arose in supervised learning,
where one aims to predict a property of an input, when it was observed
that deep neural networks
generalise well even if they achieve (near) zero training
loss~\citep{neyshabur2014search, zhang2016understanding,
neyshabur2017exploring}. This phenomenon appeared to contradict
classical statistical learning theory~\citep{hastie2009introduction}, which
predicts that models with zero training loss overfit the training data and
generalise poorly. A large body of recent work has resolved this paradox by
analysing simple models like kernel methods and small neural networks where
\emph{benign overfitting}~\citep{muthukumar2020harmless, bartlett2020benign,
tsigler2023benign} can be understood theoretically~\citep{belkin2018overfitting,
belkin2019reconciling, dascoli2020double, mei2022generalization,
hastie2022surprises}. The key insight is that interpolating solutions can
generalise well due to implicit biases induced by the learning 
algorithm~\citep{gunasekar2018characterizing, chizat2020implicit}.

However, this perspective does not immediately carry over to generative models, where
achieving zero training loss typically leads to explicit memorisation: sampling
from such a model simply reproduces training examples (or small perturbations of
them)~\citep{carlini2021extracting,somepalli2023understanding}.
For generative models, interpolation is therefore no longer benign, but
detrimental to the goal of generating new samples. Taking a step back, it is not
even clear how to measure the ability of a generative model to generalise, i.e.\
to generate new samples of high quality, in a way that is easily
operationalised. Achieving small test error when denoising a test image, or
matching the low-order statistics of the training data, does not necessarily imply
that the model is able to generate high-quality and diverse samples from the
true data distribution.


These considerations have motivated the search for notions of generalisation
tailored to generative modelling. In a remarkable paper,
\citet{kadkhodaie2024generalization} (KGSM) related generalisation to the
variability of generative models trained on different realisations of a dataset.
They trained two denoising diffusion models independently on disjoint subsets of
CelebA, a large-scale data set of images, each containing $n$ different images.
Denoising diffusion models~\citep{sohl2015deep, ho2020denoising} are trained to reconstruct clean images
from noisy versions; they generate new samples by starting from a latent
variable that is pure noise and iteratively ``denoising'' it into an image. When
training on two large datasets, KGSM found that
sampling from the two denoisers starting from the \emph{same} latent vector
yielded almost identical high-quality images: the two denoisers had converged to
essentially the same function despite not sharing any training image. At small
sample complexity however, the models essentially memorised their training data,
returning different training images for the same latent vector (see
also~\citet{carlini2021extracting,somepalli2023understanding} for demonstrations
of memorisation). KGSM thus observed a transition from a
memorisation regime (high variability between models) to a generalisation regime
with low variability. They argued that neural networks achieve this transition
at a sample complexity that is not cursed by dimensionality by exploiting the bias of local
denoising towards smooth, geometry-aware functions.

\begin{figure*}[t!]
  \centering
  \includegraphics[trim=0 360 0 0,clip,width=\linewidth]{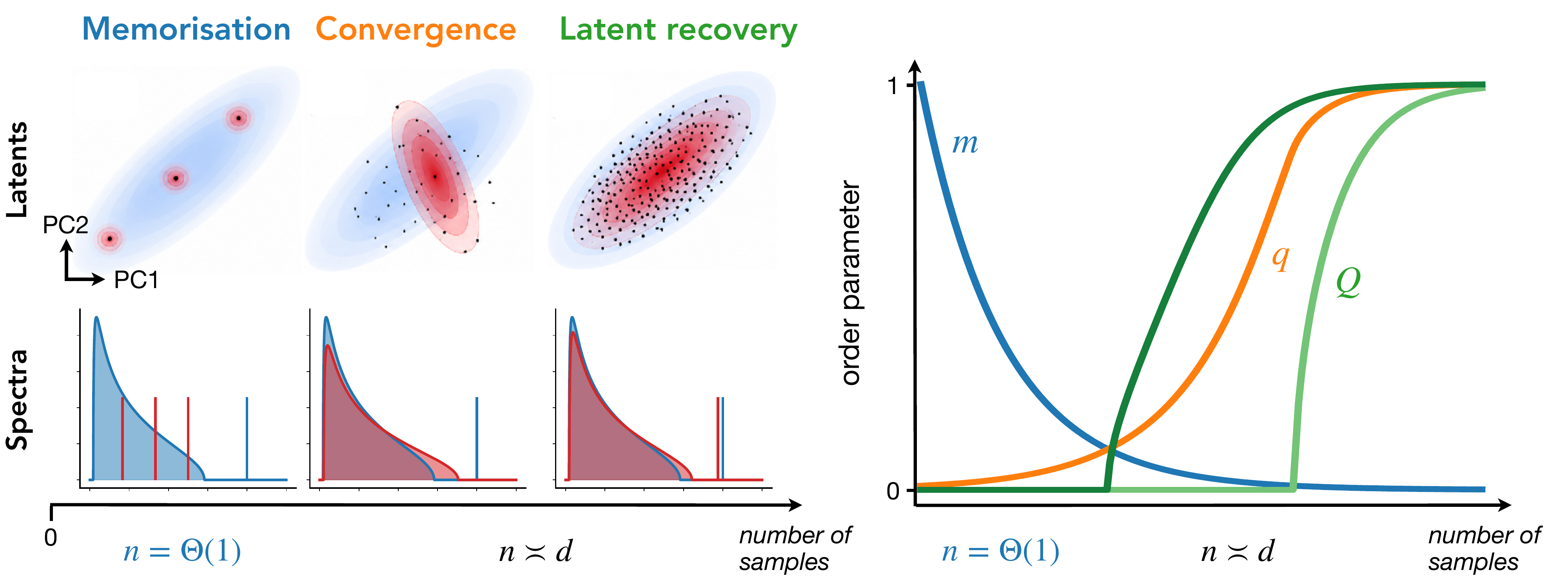}
  \caption{\label{fig:figure1} \textbf{Memorisation, convergence, and latent
  recovery in generative models.} \emph{Left}: We illustrate three regimes in
  generative models by sketching the principal components (top) and the
  covariance spectra
  (bottom) of the true data distribution (blue) and a distribution learnt from
  $n$ samples (red) in~$d$ dimensions. For very small data sets, $n=\Theta(1)$,
  the model does not learn a smooth approximation of the data distribution, but
  memorises the training set. As the number of training samples becomes
  comparable to the input dimension, $n \asymp d$, the spectra continuously
  converge and independently trained models produce similar outputs given the
  same latents, as described by \citet{kadkhodaie2024generalization}. However,
  convergence of outputs does not imply recovery of the principal latent, which
  corresponds to the outlier in the true spectrum. The principal latent is learnt
  independently from the rest of the spectrum in a sharp transition. \emph{Right}: We
  develop this picture mathematically by developing an asymptotically exact
  theory that describes the evolution of three order parameters for memorisation
  ($m$), convergence ($q$), and latent recovery ($Q$) in linear generative
  models. The two lines for $Q$ correspond to high and low signal-to-noise ratio $\beta$
  of the principal latent (dark and light green, respectively).}
\end{figure*}

The thought-provoking analysis of KGSM raises a
number of questions: how do diffusion models transition from memorisation to
generalisation? How much data is needed for convergence, and which parts of
the data distribution drive convergence? And most importantly: what does
convergence actually imply about learning the data distribution? These questions
have started to attract significant interest. A number of recent studies have
highlighted that the train and test losses of generative models are by
themselves poor proxies for the quality of the generative
models~\citep{garnier2026biased, mendes2026solvable}. Several studies analysed
the transition from memorisation to generalisation in associative
memories~\citep{pham2025memorization, kalaj2025random}, in energy-based
models~\citep{catania2025theoretical}, and in diffusion models
\emph{during training} by comparing the training and test loss both
empirically~\citep{favero2025bigger} and in random feature
models~\citep{bonnaire2025why, george2026denoising}. More recently,
\citet{wang2026random} analysed the variance of linear denoisers trained over
different realisations of a finite dataset, which has
implications for the convergence that we discuss in \cref{sec:q}. However, they
did not study memorisation, nor did they analyse what convergence implies about
generalisation.

Here, we provide an asymptotically exact analysis of the transition from
memorisation to generalisation in the simplest generative models based on linear
and spectral methods. The resulting picture, summarised as a cartoon in
\cref{fig:figure1}, separates three phenomena. With very little data, the model
memorises individual training examples. As the number of samples becomes
comparable to the input dimension, memorisation vanishes and independently
trained models converge: they produce similar outputs from the same latent
variable, as observed by \citet{kadkhodaie2024generalization}. Crucially,
convergence is driven primarily by learning the bulk spectrum of the data and
need not imply recovery of the principal latent factors, which correspond to the
outlier in the true spectrum (middle column in \cref{fig:figure1}). The latents are instead
learnt through a sharp transition. We formalise these notions by introducing
order parameters capturing memorisation ($m$), convergence ($q$), and latent
recovery ($Q$), and compute how they change with data set size in
\cref{sec:pca}. We show that convergence and latent recovery capture different
aspects of generalisation by relating them to different statistical distances
between true and learnt data distributions in \cref{sec:distances}. Finally, in
\cref{sec:powerlaw} we extend the theory to power-law spectra and define a
modified criterion for latent recovery that exhibits a sharp cross-over in
diffusion models trained on real images.

\section{Memorisation and generalisation in linear generative models}%
\label{sec:pca}

To study the transition from memorisation to generalisation quantitatively, we
have to make an explicit assumption for the data distribution. We consider the
simplest setting that captures this transition: zero-mean inputs drawn from a
multivariate Gaussian distribution. To model correlations, we use a spiked
Wishart model from random matrix theory, which captures the intrinsically
low-dimensional structure of real-world data, where a small number of latent
factors induces correlations across many observed
variables~\citep{udell2019big}. Examples include global modes in images like
illumination and pose~\citep{hyvarinen2009natural},
or social networks, where communities induce correlated connectivity
patterns~\citep{holland1983stochastic, karrer2011stochastic,
lesieur2017constrained, abbe2018community}. 
Specifically, the covariance matrix is a low-rank perturbation of the identity
in $d$ dimensions~\citep{johnstone2001distribution,potters2020first},
\begin{equation}
  \label{eq:spiked-cov}
  \Sigma^\star = \Id_d + \beta \bu \bu^\top, 
\end{equation}
where $\beta\ge0$ is the signal-to-noise ratio and the ``spike''
$\bu\in\bbS^{d-1} \coloneqq \{\bv \in \bbR^d \, : \, \|\bv\|_2 = 1\}$
represents the dominant latent direction that organises the data. This
corresponds to a limiting case of \cref{fig:figure1} where the ``bulk'' part of
the blue distribution collapses to a Dirac delta distribution located in $1$. A useful
example is a symmetric mixture of two Gaussians with identity covariance and
mean $\pm \bu$, which results in inputs with covariance $\Sigma^\star$. A more
realistic example is a social network modelled by a stochastic block
model~\citep{holland1983stochastic}, where the entries of $\bu$ encode the group
of each node in the network.

Given a data set $\mathcal{D} = \{\bx_1, \bx_2, \ldots, \bx_n\}$ of $n$ samples
$\bx_\mu \in \reals^d$ sampled i.i.d.\ from $\mathsf{P}^\star=\mathsf{N}(0,
\Sigma^\star)$, we can build a simple (Gaussian) generative model that generates new
samples $\tilde \bx (\bz)$ given a latent variable $\bz \sim N(0, \Id_d)$ as
\begin{equation}
  \label{eq:cov_emp}
    \tilde \bx(z) = \hat{\Sigma}^{\nicefrac{1}{2}} \bz, \qquad \hat{\Sigma}(\mathcal{D}) = \frac{1}{n} \sum_{\bx^\mu\in\mathcal{D}} \bx_\mu \bx_\mu^\top + \sigma^2 \Id_d,
\end{equation}
where $\hat{\Sigma}^{\nicefrac{1}{2}}$ is the matrix square root of the empirical
covariance matrix, which we estimate using a regularised (or shrinkage)
estimator with regularisation strength $\sigma \geq 0$. The key control parameter
for our analysis is the sample complexity, or load of the generative model,
which is defined as $\gamma \coloneqq n / d$. All quantities like overlaps or statistical
distances are (implicitly) a function of the sample complexity.

Linear neural networks have long been a workhorse of deep learning
theory~\citep{baldi1989neural, saxe2014exact, saxe2019mathematical,
nam2025position}. For diffusion
specifically, \citet{merger2025generalization} and \citet{wang2025analytical}
analysed the dynamics of the test loss of linear denoisers trained on the empirical
and on the population loss, respectively, but they did not study memorisation or
convergence of independently trained models, which is our focus here. More
recently, \citet{wang2026random} quantified how finite datasets shape the
expectation and variance of a learnt linear denoiser; we relate their analysis
to our convergence result in \cref{sec:q}.

\subsection{Linear models memorise inputs at small sample complexity}%
\label{sec:m}

For a generative model trained on a given data set, we quantify memorisation via
the maximal normalised overlap between a sample generated from the latent
vector~$\bz$ and the training samples, averaged over the latent vector $\bz$:
\begin{equation}
  \label{eq:m}
  m \coloneqq \EE_\bz \max_\mu  \frac{|\langle \bx_\mu, \tilde{\bx}(\bz)\rangle|}{\norm{\bx_\mu} \norm{\tilde{\bx}(\bz)}}.
\end{equation}
The first insight is that a single sample is perfectly memorised: for $n = 1$,
$m = 1$. In
\cref{app:sec:pca_memorisation}, we show that in the limit where $n \gg 1$
\emph{after} $d \to \infty$, the memorisation overlap decays as
\begin{align}
  m \simeq \sqrt{\frac{2 \log n}{n(1 + \sigma^2)}}.
\end{align}
We note the decay of memorisation with increasing data set size, and how the
regularisation $\sigma^2$ reduces memorisation. In particular, the transition
of $m$ from $1$ (perfect memorisation) to almost $0$ (near absent memorisation) occurs on the scale $n = \Theta(1)$, which we plot in blue in
\cref{fig:figure1}.

\subsection{Independently trained linear models converge at linear sample complexity}%
\label{sec:q}

To quantify the convergence of the generative models,
KGSM computed the average overlap between pairs
of samples generated by two independently trained generative models from the
same latent variable~$\bz$. In our setup, given two independently drawn data
sets $\mathcal{D}^{(b)}, b \in \{0, 1\}$ of $n$ samples $(\bx^{(b)}_\mu)_{\mu=1}^n \iid
\mathsf{P}^\star$, and given a latent variable $\bz \sim \mcN(0, \Id_d)$ we define
convergence via the overlap of two samples $\tilde{\bx}^{(b)}(\bz)$ drawn from the generative models estimated
from $\mathcal{D}^{(b)}$ given the same latent~$\bz$:
\begin{equation}
  \label{eq:q}
  q \coloneqq \EE_\bz \frac{|\langle \tilde{\bx}^{(0)}(\bz), \tilde{\bx}^{(1)}(\bz) \rangle|}{\norm{\tilde{\bx}^{(0)}(\bz)} \norm{\tilde{\bx}^{(1)}(\bz)}}.
\end{equation}
For $n = \Theta(1)$ while $d\to\infty$, a simple argument shows that $q$ decays
as $q \sim 1/\sqrt{d}$, see \cref{app:theory-convergence}.
The relevant regime for
convergence of linear generative models is instead $n \asymp d$. In this regime, we
can derive an exact expression for the convergence overlap $q$ as a function of
the sample complexity $\gamma \coloneqq n / d$, namely
\begin{equation}
    \label{eq:q_analytical}
  q = \frac{1}{1 + \sigma^2}{\left(\mathbb{E}_\gamma \sqrt{\lambda + \sigma^2}\right)}^2 + \smallO(1), 
\end{equation}
where $\EE_\gamma$ denotes the average over the Marchenko-Pastur distribution
$\mu_\gamma(\lambda)$ with aspect ratio $\gamma$, which is the limiting spectral
distribution over the bulk eigenvalues $\lambda$ of the unregularised sample
covariance $\nicefrac{1}{n} \sum_{\bx^\mu\in\mathcal{D}} \bx_\mu \bx_\mu^\top$, see~\cref{app:theory-convergence} for more details.
From the plot in \cref{fig:figure1}, we see that the
convergence overlap increases continuously from zero to one in the regime $n
\asymp d$. We note that our result for $q$ is closely connected to a result in
\citet{wang2026random}, who derive an expression for the deterministic
equivalent for the expectation of a sample generated by linear diffusion model
given the noise sample $\bz$ in their eq.~(8). Computing the squared norm of
their eq.~(8) in the case $\Sigma = \Id_d$ and $\mu = 0$ yields our
\cref{eq:q_analytical}.

Taken together, the expressions for $m$ and $q$ capture the phenomenology of the KGSM
experiment: linear generative models memorise their inputs at small sample
complexity $n = \Theta(1)$. As $n$ grows to be linear in the input dimension,
memorisation $m$ vanishes and the generative models converge in their outputs
for the same latent. However, this notion of convergence only depends on the
spectral density of the \emph{bulk} of the eigenvalues: \cref{eq:q_analytical}
is identical, at leading order, to the isotropic case where
$\mathsf{P}^\star=\mathsf{N}(0, \Id_d)$. In other words, convergence of the outputs
is not sensitive to whether the most important latent direction of the data
$\bu$ has been recovered or not! More precisely, the outlier eigenvalue
corresponding to the spike $\bu$ contributes only to the vanishing $o(1)$
correction associated with the single spiked eigenvalue, because a rank-one
spike affects only one out of $d$ eigenvalues. In fact, as $\gamma \to \infty$,
the Marchenko-Pastur bulk concentrates at $\lambda=1$, so~$q \to 1$. To capture
the recovery of the leading latents on generalisation, we now introduce another
order parameter.

\subsection{Subspace recovery and the BBP transition}%
\label{sec:Q}


The failure of the convergence overlap $q$ to capture recovery of the principal latent
factor leads us to introduce a third overlap~$Q$ that
specifically measures the alignment between the two top eigenvectors of
$\hat{\Sigma}^{(1)}$ and $\hat{\Sigma}^{(2)}$,
\begin{equation}
  \label{eq:Q}
    Q \coloneqq |\langle \bv_{\max}(\hat{\Sigma}^{(0)}), \bv_{\max}(\hat{\Sigma}^{(1)}) \rangle |.
\end{equation}
This order parameter measures to what extent the two generative models agree on
what is the principal subspace, and hence the principal latents, of the data.
We compute an analytical expression for $Q$ using standard results from
random matrix theory for Wishart matrices, and in particular the so-called
``BBP''-transition of \citet{baik2005phase}, first described in \citet{edwards1976eigenvalue}:
\begin{align}
  \label{eq:Q_delta}
  Q = \delta(\gamma)^2 + \smallO(1), \qquad
  \delta(\gamma) \coloneqq 
  \begin{dcases}
    0 & \textrm{ if } \gamma < \gamma_c(\beta) \coloneqq \beta^{-2}, \\
    \sqrt{\frac{\beta^2 \gamma - 1}{\beta(1 + \beta \gamma)}} & \textrm{ if }
    \gamma \geq \gamma_c(\beta),
  \end{dcases}
\end{align}
where $\delta(\gamma)$ is obtained from the the BBP
formula~\cite{benaych2011eigenvalues}, see \cref{app:sec:bbp}.  Notice that
indeed $Q \to 1$ as $\gamma \to \infty$, but that it only becomes non-trivial after $\gamma$ crosses a $\beta$-dependent threshold, as opposed to the convergence parameter $q$. We plot the resulting expression for
$Q$ in green in \cref{fig:figure1} for small and large values of $\beta$ (light
and dark green, respectively). We see that the principal subspace can be
recovered before or -- more concerningly when using convergence as a proxy for
generalisation -- \emph{after} convergence. The key point we wish to highlight
is that convergence and latent recovery occur \emph{independently of each
other}. Note that the different behaviours of convergence and subspace recovery would
equally apply to a data model in which there are several spikes $\bu_k$, as long
as their number remains of order 1 with respect to the input
dimension~\cite{benaych2011eigenvalues}. These behaviours hint at different
notions of generalisation that might be appropriate on data with low-dimensional
latent structure; we now make this intuition precise by relating the overlaps
$q$ and $Q$ to different statistical distances between probability densities.

\section{Convergence and latent recovery govern different statistical distances
between true and learnt distributions}%
\label{sec:distances}

So far, we have characterised the learnt model using the order parameters $m$,
$q$, and $Q$, which may appear somewhat \emph{ad hoc} as measures for the
quality of the learnt distribution. Ideally, we would like to assess the quality
of the learnt model $\hat{\mathsf{P}}$ via its statistical distance to the
true distribution $\mathsf{P}^\star$. Here, we show that~$q$ and $Q$ are
indeed proxies for statistical distance: specifically, they correspond to the
Kullback-Leibler divergence and the maximum-sliced (MS) distance, respectively.
This distinction highlights the different learning processes they capture.

\begin{figure}[t!]
  \centering
  \includegraphics[width=.4\linewidth]{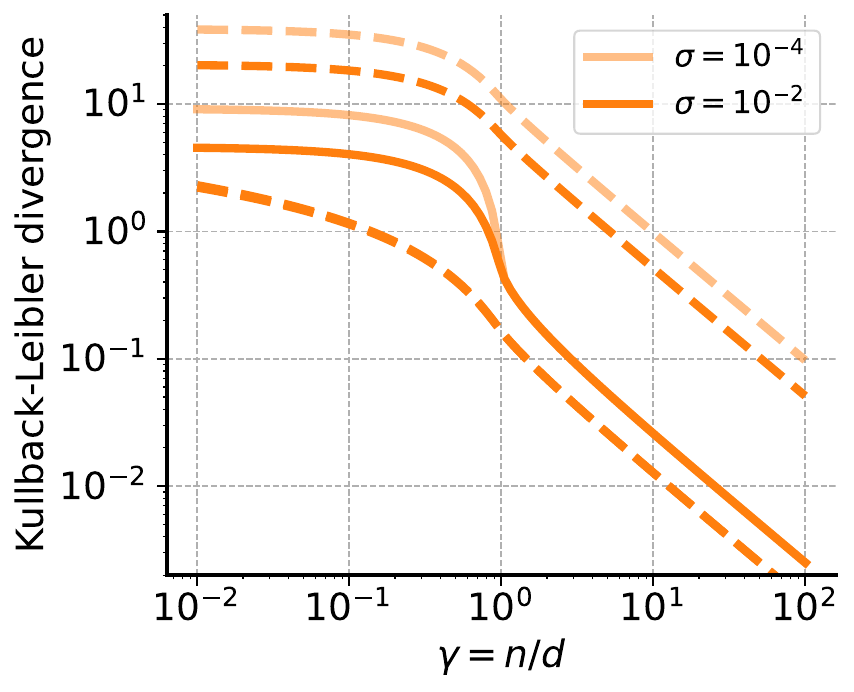}%
  \includegraphics[width=.4\linewidth]{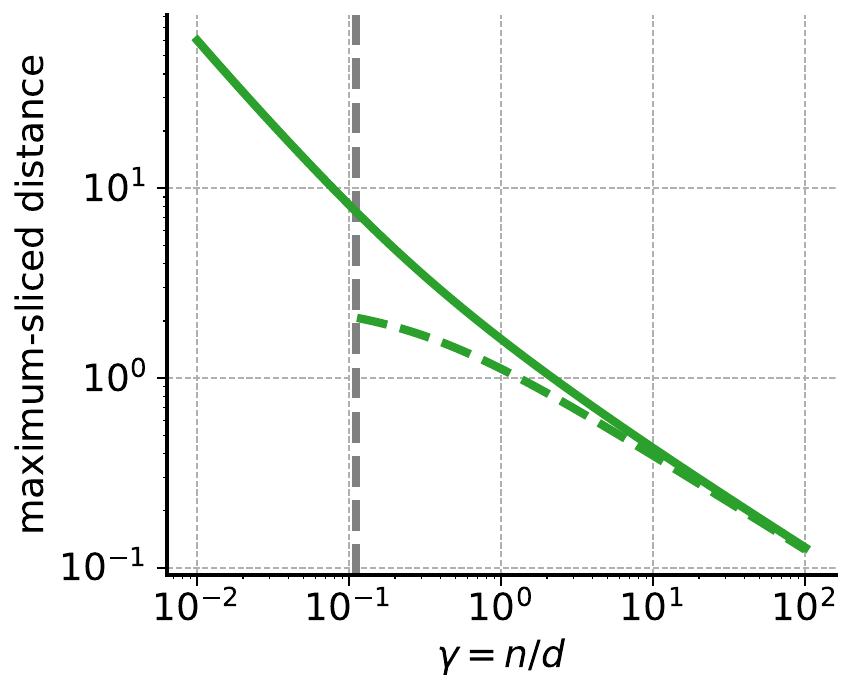}
  \caption{\label{fig:distances} \textbf{Convergence and latent recovery
  govern different statistical distances between true and learnt
  distributions $\mathsf{P}^\star$ and $\hat{\mathsf{P}}$.} \emph{Left}: Rescaled Kullback-Leibler divergence between
  $\mathsf{P}^\star$ and $\hat{\mathsf{P}}$
  for two regularisation strengths $\sigma$, as a
  function of sample complexity $\gamma = n / d$, where $n$ is the number of
  samples and $d$ is the input dimension. The dashed lines show the upper and lower
  bounds of \cref{eq:dkl_bounds}, which only depend on the
  convergence overlap $q$. \emph{Right}: Maximum-sliced distance between $\mathsf{P}^\star$ and $\hat{\mathsf{P}}$ (solid line), which decays asymptotically with the subspace
  overlap as $\sqrt{1-Q}$, see 
  \cref{eq:dms-asymptotics} (dashed line). \emph{Parameters: $\beta=4$}.}
\end{figure}

\subsection{The convergence overlap $q$ controls the Kullback-Leibler divergence}

The Kullback-Leibler divergence $\DKL$ 
between true and learnt distributions is not affected by the presence of
a spike in the covariance, so to leading order we can consider the case $\beta =
0$, see \cref{app:sec:what-spike}.  To remove the slight mismatch between true and
learnt distribution due to the regularisation,
we consider an ``excess'' $\DKL$ between the learnt and the ``regularised''
true covariance $\Sigma^\star + \sigma^2 \Id_d$. In
\cref{app:sec:q_DKL_proof}, we show that the convergence overlap $q$ provides
the following upper and lower bounds of the rescaled Kullback-Leibler divergence
$\mathcal{D} \coloneqq \nicefrac{1}{d} \; \EE \, \DKL \left[\mathsf{N}(0,
\hat{\Sigma}) | \mathsf{N}(0, \Sigma^\star + \sigma^2 \Id_d) \right]$, 
\begin{equation}
  \label{eq:dkl_bounds}
  -\frac{1}{2} \log q \leq \mcD \leq 2 \left[1 + \log\left(1 + \sigma^{-2}\right)\right] (1-q).
\end{equation}
We plot $\mathcal{D}$ for two different
values of the regularisation $\sigma$ together with the bounds in
\cref{fig:distances}. There are two regimes: when the number of samples is smaller than the input dimension $n < d$, $\DKL$ is almost
stationary and its value determined by the regularisation parameter $\sigma^2$;
for $n>d$ instead, it decays with the sample complexity as a power-law, which is
nicely captured by the two bounds. 
We note that for $\gamma \geq 1$ one can improve the
multiplicative constant $1 + \log(1+\sigma^{-2})$ to an explicit value that goes to $2$ for large $\gamma$ and any finite $\sigma \geq 0$, see the end of~\cref{app:sec:q_DKL_proof}.

The proofs of the bounds do not explicitly use the form of the Marchenko-Pastur law; instead, we only require that $\mathcal{D}$ can be
expressed in terms of the spectral density. Outside of the elliptical family of
distributions, which contains the multivariate normal distribution, this direct
link can break. For example, if two estimators capture the same subset of modes
of a multi-modal data distribution, samples from the estimators can be
very close, even though the $d_\mathrm{KL}$ to the true distribution
will be large due to the missing modes. This simple example reveals another
caveat of using convergence as a measure of generalisation.

\subsection{Latent recovery governs the decay of the maximum-sliced distance}


In contrast to the Kullback-Leibler divergence, where we integrate over the
whole input space, the maximum-sliced distance
$\MS$~\cite{kolouri2019generalized} uses a smooth test function to measure the
largest discrepancy between two multivariate distributions over all linear
projections. We obtain an exact expression for $\MS(\mathsf{P}^\star,
\hat{\mathsf{P}})$ as $n, d \to \infty$ in \cref{app:sec:MS-Q}, and plot the
result in \cref{fig:distances}. For $\beta \ge 1 + \gamma^{-\nicefrac{1}{2}}$, the decay of $\MS$ is driven by the increase of subspace overlap $Q$; as $Q
\to 1$, the $\MS$ goes to 0 as
\begin{equation}
  \label{eq:dms-asymptotics}
  \MS(\mathsf{P}^\star, \hat{\mathsf{P}}) = \frac{1}{2} \sqrt{\beta (1 + \beta)} \sqrt{1 - Q} + \mcO(1 - Q) \quad \mathrm{as}\quad Q\to1.
\end{equation}
We plot this asymptotic expression for $\MS$ with the dashed green line in
\cref{fig:distances} to show how subspace recovery directly governs convergence
of the learnt distribution in terms of maximum-sliced distance, validating the
choice of $Q$ as a measure of generalisation.

\section{Towards realistic data: the case of power-law spectra}%
\label{sec:powerlaw}

\subsection{Power-law spectra confound convergence and subspace recovery}%

How does our theory extend to real images? 
A key feature of images that is not included in the discussion above is that their
spectra, i.e.\ the eigenvalues of their covariance, decay like a power-law. This
decay reflects a continuous hierarchy of modes rather than a clear separation
between a bulk and spikes~\citep{hyvarinen2009natural}. For example, the
eigenvalues of the empirical covariance of images from the CelebA data set and
of patches sampled randomly from ImageNet decay (roughly) as a power-law,
see \cref{fig:powerlaw}(a). 
However, this decay is not visible trivially in the space of pixels: the variances of individual pixels
$\hat{\sigma}_i^2 \coloneqq \hat{\Sigma}_{ii}$ are comparable across the whole image.

This power-law structure in real images drastically changes the behaviour of the convergence overlap~$q$ and the subspace recovery parameter $Q$. We plot both order parameters for
linear generative models ``trained'' on increasingly large, disjoint subsets of
CelebA in \cref{fig:powerlaw}(b), see \cref{app:sec:datasets} for experimental
details. Contrary to the near-isotropic setting of \cref{sec:pca}, where $Q$
underwent a sharp phase transition, on CelebA $Q$ attains large values for a
small number of samples and continues to increase continuously from there, and
likewise on ImageNet, see \cref{fig:imagenet}. The rapid increase of $Q$ on real
images is driven by low-frequency modes, whose large eigenvalues make them easy
to recover. 
Informally, this phenomenon is a consequence of the generative models agreeing on the power-law decay, rather than the latent structure of the data. 
We analyse this point theoretically below, by introducing a 
statistical model for data that correctly captures the pixel-wise variance and
spectral density of real images.
Crucially, our analysis suggests a refinement of
the subspace overlap that $(i)$ is appropriate for real images, and $(ii)$ undergoes a sharp transition corresponding to the recovery of the latent structure of the data. 
We thus redeem the distinction between convergence and subspace recovery in the context of real images.

\begin{figure*}[t!]
  \centering
  \includegraphics[width=.33\linewidth]{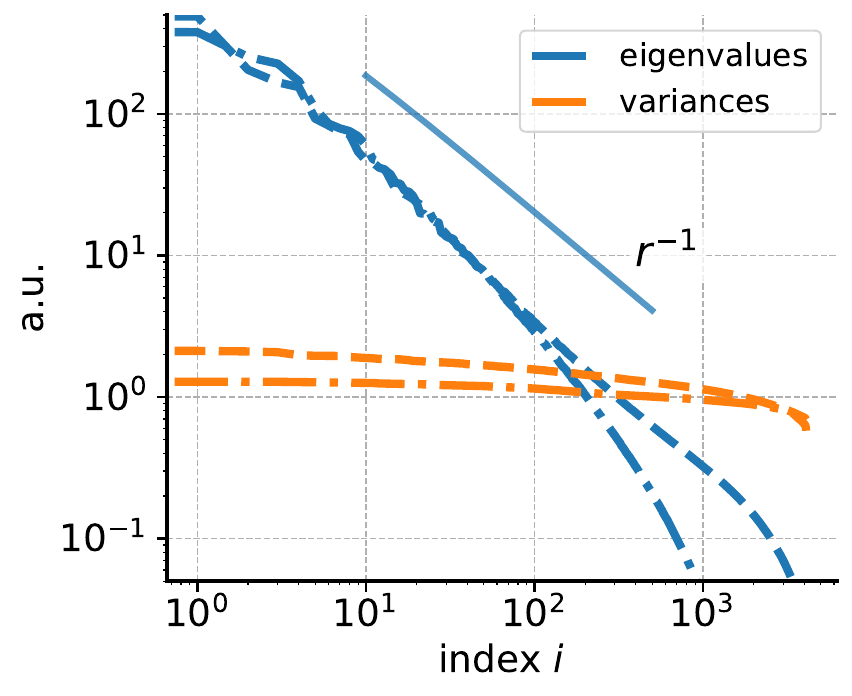}%
  \includegraphics[width=.33\linewidth]{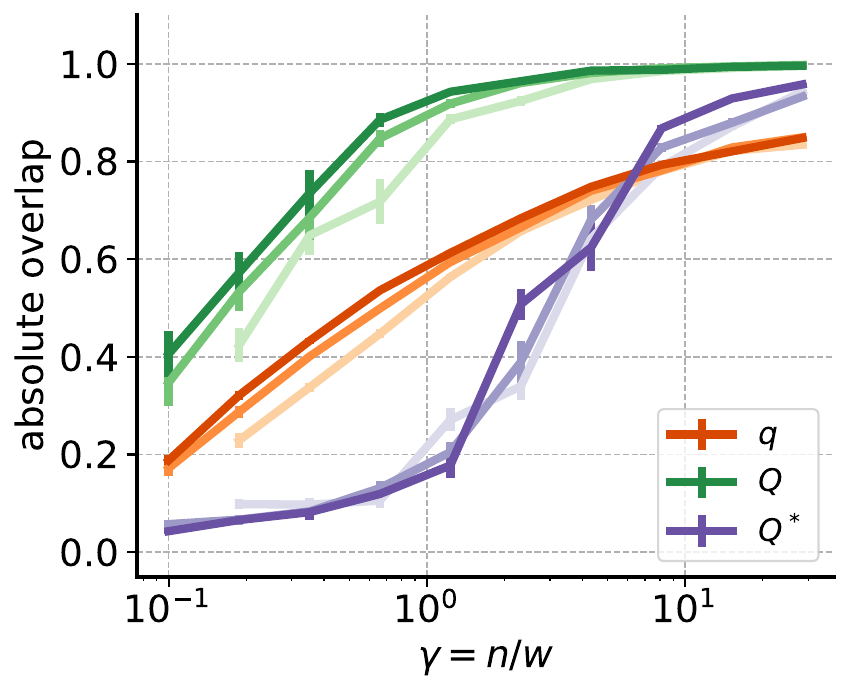}%
  \includegraphics[width=.33\linewidth]{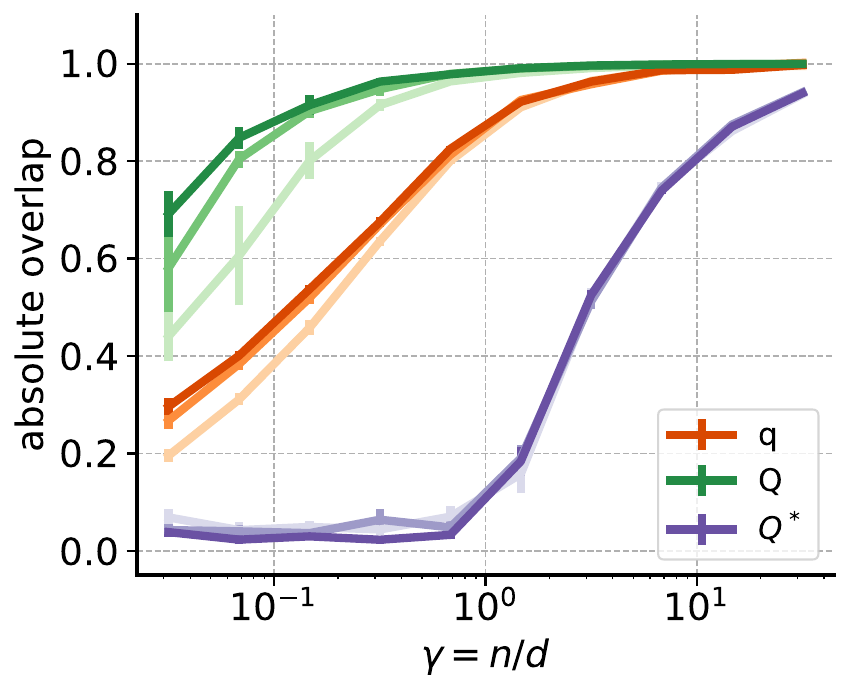}%
  \caption{\label{fig:powerlaw} \textbf{A rotated subspace overlap identifies a
  phase transition on data with power-law spectrum.} \textbf{(a)}:
  Eigenvalues~$\lambda_i$ and variances $\hat{\sigma}_i^2 = \hat{\Sigma}_{ii}$
  of the covariance matrix $\hat{\Sigma}$ of CelebA images (dashed-dotted) and
  ImageNet patches (dashed) against their index $i$ (image width $w=64$).
  \textbf{(b)} Convergence overlap $q$ and subspace overlap $Q$,
  \cref{eq:q,eq:Q}, for a linear model fitted to CelebA images as a function of
  sample complexity $\gamma = n / w$ for image widths $w \in \{16, 32, 64\}$
  (increasingly dark lines). We also plot the rotated subspace overlap $Q^\star$
  that factors out the power-law decay of the spectrum, \cref{eq:Qstar}.
  \textbf{(c)} The same order parameters as in (b) evaluated
  on the power-law data model from \cref{sec:powerlaw-model} with $\beta=1$.}
\end{figure*}

\subsection{The power-law input model}
\label{sec:powerlaw-model}

The key idea of this ``power-law'' input model is to separate the effects of the
latents from the power-law spectrum of the covariance. We work in a fixed
orthonormal basis $F \in \mcO(d)$. For images, the Fourier basis is a natural
choice since Fourier amplitudes follow a power-law, but for our model any
delocalised basis, e.g.\ any uniformly-sampled random orthogonal matrix $F$, will
do; an extension to unitary rather than orthogonal bases is also
straightforward.  We sample a sequence of amplitudes $\{\sigma_k\}_{k=1}^d$ from
a power-law distribution
 \begin{equation}
  \label{eq:eigenvalues}
  \sigma_k \propto k^{-\alpha}, \qquad \sum_{k=1}^d \sigma_k^2 = 1,
\end{equation}
where $\alpha > 0$ is the exponent that governs the decay of the spectral
density, and the normalisation condition controls the norm of the resulting inputs (akin
to normalising the total brightness of images). 
Finally, we introduce a latent structure by assuming that the correlation
matrix of our zero-mean inputs, $\rho_{ij} \coloneqq \EE[x_i x_j] /
\sqrt{\EE[x_i^2] \EE[x_j^2]}$, has a rank-one structure similar
to \cref{eq:spiked-cov}. All in all, we arrive at the following ``power-law
model'' for the covariance matrix of the data distribution:
\begin{equation}
  \label{eq:powerlaw-model}
  \Sigma^\star = F^\top D_{\sigma} \rho D_{\sigma} F \quad \text{with} \quad   \rho = \Id_d + \beta \bu \bu^\top,
\end{equation}
where $D_{\sigma} \coloneqq \textrm{Diag}(\{\sigma_k\}_{k=1}^d)$.

The power-law model reproduces several features seen in the distribution of real images.
First, it reproduces
the power-law spectrum of the covariance and the near-isotropic variances of
real images that we saw in \cref{fig:powerlaw}(a). Second, we show in
\cref{fig:image_correlations} that the spectrum of the correlation matrix of CelebA in
Fourier space does indeed separate into a bulk and a few outlier eigenvalues. This finding validates the rank-one additive model for the correlation matrix in the rotated space in~\cref{eq:powerlaw-model}. 
Finally, \cref{fig:powerlaw}(c) reveals that the
evolution of $q$ and $Q$ for linear generative models ``trained'' on inputs
sampled from the power-law input model of \cref{eq:powerlaw-model} closely matches the results on real
images; in particular, the model reproduces the quick rise of $Q$.

The power-law model identifies the reason for which the subspace overlap $Q$ fails to detect
consistent estimation of the latent structure of the data. Within the model, the
spike $\bu$ is the most relevant latent factor because it identifies which
variables fluctuate together because of the shared latent, independently of
whether some variables are intrinsically larger or smaller than others because
of the power-law decay of the covariance eigenvalues. However, the eigenvectors
of~$\Sigma^\star$ do not coincide with $\bu$; instead, a short calculation
reveals that the leading eigenvector $\bv$ with eigenvalue
$\lambda$ of the population covariance $\Sigma^\star$ is given by
\begin{equation}
  \label{eq:rotated-eigenvector}
  \bv = F^\top \tilde{\bv} \qquad \text{with} \qquad \tilde{v}_i \propto \frac{\sigma_i u_i}{\lambda - \sigma_i^2}
\end{equation}
Hence in the $F$-basis, the spike enters as $D_\sigma \bu$, so coordinates with
larger $\sigma_i$ are amplified, while those with smaller $\sigma_i$ are
suppressed. As a result, the leading principal component (the top eigenvector of
the empirical covariance matrix) mixes latent structure with the variance
profile, and may be dominated by the latter. Hence $Q$ is strongly confounded by
the variance profile and can be large even when recovery of the latent
correlation structure remains weak, as seen in \cref{fig:powerlaw}.

\subsection{A rotated subspace overlap undergoes a sharp transition on real
images}%

\begin{figure*}[t!]
  \centering
  \includegraphics[width=\linewidth]{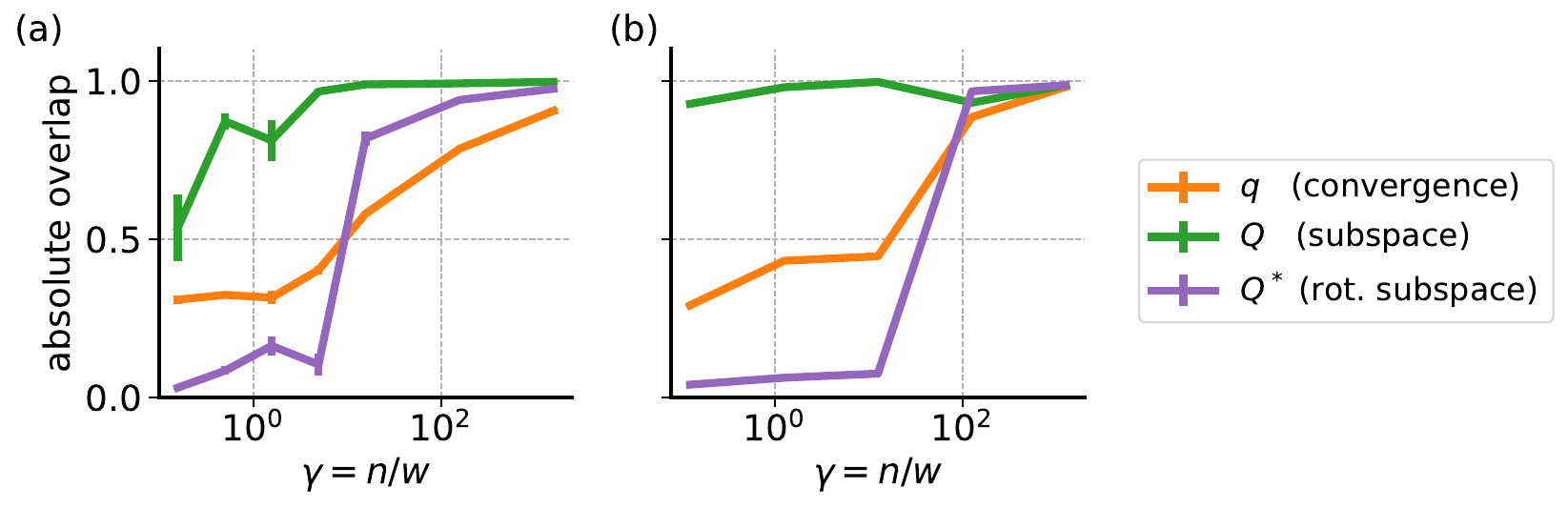}
  \caption{\label{fig:diffusion} \textbf{Convergence and subspace recovery in
  diffusion models.} We plot the convergence overlap $q$, the naïve subspace
  overlap $Q$ and the rotated subspace overlap $Q^\star$ in the Fourier basis, \cref{eq:q,eq:Q,eq:Qstar}
  respectively, for images sampled from diffusion models trained on CelebA.
  \textbf{(a)} Images sampled from a vanilla UNet denoiser that we trained
  independently on increasing large disjoint subsets of CelebA. Average and
  error in the mean are computed over four runs per sample complexity, see
  \cref{app:sec:diffusion} for details. \textbf{(b)} Images sampled by
  \citet{kadkhodaie2024generalization} from a blind
  denoiser~\citep{mohan2020robust} using the sampling algorithm of
  \citet{kadkhodaie2020solving}.}
\end{figure*}

How can we ``factor out'' the effect of the power-law spectrum? A natural idea
is to consider the leading eigenvector of the correlation matrix of the inputs.
However, since the pixel variances are all comparable, the correlation matrix is
similar to the covariance matrix, and yields similar results. The key is instead
to analyse the data in a basis where the power-law structure is explicit, like
the $F$ basis in our model or the Fourier basis for natural images. In this
representation, the dominant spectral modes can be factored out by working with
the correlation matrix, isolating the low-rank structure. We therefore define
the \textbf{rotated subspace overlap $Q^\star$} as the overlap between the leading
eigenvectors of the \emph{correlation} matrix of inputs in the rotated space,
\begin{equation}
  \label{eq:Qstar}
    Q^\star \coloneqq |\langle \bv_{\max}(\hat{\rho}^{(0)}), \bv_{\max}(\hat{\rho}^{(1)}) \rangle |,
\end{equation}
where $\hat{\rho}^{(0,1)}$ are the correlation matrices of the inputs
\emph{after} the linear transformation.
Note that $Q^\star$ depends on the choice of the basis in which we rotate the inputs.
As we can see in \cref{fig:powerlaw}(c),
$Q^\star$ (violet) undergoes a BBP-like phase transition around a sample complexity $\gamma = 1$ in the power-law model of \cref{eq:powerlaw-model} with $\beta  = 1$. On CelebA, we evaluate $Q^\star$ by 
computing the overlap between the leading
eigenvectors of the (complex) empirical correlation matrices between Fourier
coefficients of two disjoint data sets. Again, we find that $Q^\star$ undergoes a
sharp transition as we increase the sample complexity~$\gamma = n / w$, where
the normalisation is now with respect to the patch width $w$ (violet line in
\cref{fig:powerlaw}(b) and in \cref{fig:imagenet} for ImageNet).

To validate our choice of $Q^\star$ as a probe for latent correlation structure, we
considered controls in which this structure is removed while the spectral
envelope is preserved. In the power-law model of \cref{eq:powerlaw-model},
setting $\beta=0$ removes the correlation spike $\bu$ without changing the
spectrum; we show in \cref{fig:powerlaw_snr0}(a) that this leaves $q$ and~$Q$
unaffected, while $Q^\star$ remains close to zero for all~$\gamma$. On CelebA, we
can obtain the same effect by randomising the phase of each Fourier-coefficient
of each image independently, which destroys the off-diagonal correlations in
Fourier space while maintaining the power-law spectrum, see
\cref{app:sec:powerlaw_snr0}. We found that on CelebA images with randomised
phases, $Q^\star$ remains close to zero, while $q$ and $Q$ remain unaffected, see
\cref{fig:powerlaw_snr0}(b).

We also evaluated $Q^\star$ on images sampled from trained diffusion models.
We trained these diffusion denoising probabilistic models~\citep{sohl2015deep,
ho2020denoising} using a U-Net denoiser on increasingly large subsets of CelebA,
mirroring the approach of KGSM. While they trained a ``blind'' denoiser that is
not given the noise level explicitly~\citep{mohan2020robust} and sampled from it
using the algorithm of \citet{kadkhodaie2020solving}, we used a vanilla denoiser
and a standard sampling procedure (see \cref{app:sec:diffusion} for details on
the experimental setup). We then computed the overlaps $m, q, Q$ and $Q^\star$ from
images sampled from these models. We first confirmed the transition from
memorisation to convergence as we increased $n$, see
\cref{fig:diffusion_m_and_q}. In \cref{fig:diffusion}, we find that while the
``naïve'' subspace overlap $Q$ is un-informative on the generated samples, which
have a power-law spectrum, the rotated subspace overlap $Q^\star$ exhibits a sharp
transition both when evaluating on the images generated from the models that we
trained (left) and on the images provided by KGSM (right).

\section{Discussion: the two sides of generalisation in generative models}%
\label{sec:discussion}

Our theoretical analysis of simple generative models revealed a clear separation
between memorisation, convergence, and subspace recovery. Linear generative
models memorise their training samples at low load $n = \Theta(1)$ and
continuously converge in output conditioned on the same latent at linear sample
complexity $n \asymp d$. However, this convergence is \emph{not} sensitive to
consistent estimation of the principal latent factors of the data, which is
captured by the subspace overlap $Q$. The latter quantity undergoes a BBP-style phase
transition~\citep{baik2005phase} which can occur both before or -- more
concerningly when using convergence as a proxy for generalisation --
\emph{after} convergence. For data with power-law spectra, a rotated subspace
overlap defined in terms of the leading eigenvectors of Fourier-space correlations recovers the same
phenomenology on real images.

Instead of identifying the ``right'' measure of generalisation, our analysis
highlights two different objectives for learning a generative model. The
convergence overlap $q$ captures agreement in the ``bulk'' of the learnt
distribution, while the subspace overlap $Q$ determines whether leading latent
factors of the data have been estimated consistently. In our setting, this
distinction can be made mathematically precise by relating $q$ to the
Kullback-Leibler divergence and $Q$ to the maximum-sliced distance between true and learnt distributions. The
separation between the two may be even stronger in tensor PCA~\citep{montanari2014statistical} or non-Gaussian methods like
independent component analysis (ICA)~\citep{hyvarinen2000independent}, where
spike recovery in high-dimensions requires at least $n\gtrsim d^2$
samples~\citep{ricci2025feature}.

The distinction between learning the bulk and the latents goes beyond the realm of images. Take social
networks for example. In the stochastic block
model~\citep{holland1983stochastic, lesieur2017constrained, abbe2018community},
the community structure appears as a low-rank perturbation to an otherwise
random connectivity, akin to the rank-one perturbation of the identity
covariance in the spiked Wishart model. In this setting, convergence and
subspace recovery again separate two distinct objectives of modelling: learning
the bulk statistics of the graph -- for example, is the number of connections a
node makes distributed isotropically or does it follow a power-law? -- vs.\
learning the community structure that creates the low-rank signal in the
connectivity. A generative model can ``fit the bulk'' by matching the statistics
of the underlying graph, while still missing the communities, which would
translate into large~$q$ and small $Q$. 

Social networks also highlight the relevance of the rotated subspace overlap
beyond the realm of images. Real social networks typically exhibit strong degree
heterogeneity with a few highly connected nodes and many weakly connected ones,
a phenomenon known as Feld's friendship paradox: ``your friends have more
friends than you.''~\citep{feld1991your}. This heterogeneity can dominate the
leading spectral modes of the connectivity matrix, so that the top eigenvector
separates high- and low-degree nodes while missing the underlying community
structure. As a result, the subspace overlap $Q$ can be large even when the
latent structure, i.e.\ the community assignments, are not yet recovered. As in
images with power-law spectra, one therefore needs a whitened or rotated notion
of subspace recovery that factors out the degree heterogeneity before probing
consistent recovery of the low-rank signal~\citep{karrer2011stochastic}.

The idea of comparing generative models trained on disjoint subsets of the same
data set has an interesting connection to classical statistical learning theory
(SLT)~\citep{hastie2009introduction}. SLT relates the ability of a machine
learning model to generalise to how much the trained model changes when one
training sample is removed. In a classic analysis, Vapnik and Chervonenkis
showed that if changing one data point rarely changes the behaviour of a trained
perceptron, its \emph{test} error is small~\citep{vapnik1974theory}. In a
similar spirit, KGSM argue that if the learnt denoiser is robust to resampling
the \emph{entire} training set, it is a function of the underlying data
distribution, rather than the training set, and should hence generalise well.
Exploring this connection in detail is an interesting direction for future
research.

\citet{kadkhodaie2024generalization} and \citet{kamb2025analytic} made the point
that the inductive biases of convolutional denoisers are crucial in achieving the transition from memorisation to
generalisation. These inductive biases may
affect bulk fitting and latent recovery in different ways. For example,
architectural choices that promote smoothness and locality may accelerate
convergence without necessarily improving recovery of the latent factors.
Characterising the impact of different inductive biases on different objectives is
an important direction for future work.

Generally speaking, our work suggests that generalisation in generative
models is best understood as a sequence of transitions associated with
distinct statistical objectives. Hence evaluation of generative models should
also consider several objectives~\citep{vandenburg2021memorization}, a point
that was recently also made by \citet{garnier2026biased,mendes2026solvable}.
The distinction between learning bulk statistics like spectra and learning the
principal latent variables is especially important for the growing number of
applications
of generative models in the natural sciences. A model that fits the bulk
while missing the principal latents may perform well as a sampler, while being
misleading as a scientific model.


\section{Concluding perspectives}

Our analysis of linear generative models supports the idea that generalisation
in generative models is a multi-faceted concept. An important next step is to
determine how far the picture we have developed here extends beyond the simple
setting that we have studied. We considered a static estimator for our
generative model; analysing the dynamics of $m, q$ and $Q$ in simple models and
real diffusion models is an important next step.  We made extensive use of the
Gaussian approximation for both the data and the generative model, but real data
is not Gaussian, and good generative models capture these higher-order
correlations: extending our work beyond the Gaussian approximations is an
interesting avenue for further research. A natural starting point would be
non-Gaussian linear generative models like ICA~\cite{hyvarinen2000independent},
or to use ideas from Gaussian universality, both for kernel
matrices~\citep{liao2018spectrum, seddik2019kernel, mei2021generalization} in
Kernel PCA~\citep{scholkopf1997kernel} and for simple neural
networks~\citep{goldt2020modelling, hu2022universality, goldt2021gaussian,
loureiro2021capturing}. In terms of denoisers, an extension to either small
auto-encoders~\citep{refinetti2022dynamics, cui2023high, cui2024analysis,
bardone2026theory} or random feature models~\citep{bonnaire2025why,
george2026denoising} could reveal how learning non-Gaussian input statistics
affects memorisation and generalisation. The main control parameter in our
analysis was the size of the training data set. Taking a complementary
perspective, \citet{biroli2024dynamical} established a series of phase
transitions that a sample undergoes \emph{during sampling} in a trained diffusion
model, see also refs.~\citep{george2025analysis, achilli2026theory}.
Establishing the dynamics of convergence and latent recovery during sampling is
an interesting open problem. Finally, maybe the most important open question stemming from our work is
how to extend the idea of convergence and latent recovery to auto-regressive
generative models, and in particular to transformers for text generation.

%% file: m2g_pca_acknowledgements.tex
We thank Claudia Merger, Marc Mézard, Fabiola Ricci, Enrico Ventura, and Lenka Zdeborová for
stimulating discussions. We thank Florentin Guth, Guido Sanguinetti and Zahra
Kadkhodaie for valuable comments and 
feedback on the manuscript. SG gratefully acknowledges funding from the European
Research Council (ERC) for the project ``beyond2'', ID 101166056; from the
European Union--NextGenerationEU, in the framework of the PRIN Project SELF-MADE
(code 2022E3WYTY – CUP G53D23000780001), and from Next Generation EU, in the
context of the National Recovery and Resilience Plan, Investment PE1 -- Project
FAIR ``Future Artificial Intelligence Research'' (CUP G53C22000440006).

%% file: m2g_pca_supp.tex
\section{Memorisation and generalisation in linear generative models}%
\label{app:m2g_pca}

In this section, we give a detailed analysis of memorisation, convergence and
subspace recovery in linear generative models.

\subsection{Setting}%

\paragraph{Ground truth data distribution} We sample a data set of inputs
i.i.d.\ from the ground truth distribution $\mathsf{P}^\star$, which we take to
be a multivariate normal
distribution with mean zero and covariance $\Sigma = \Id_d + \beta \bu \bu^\T$, for some unit-normed vector $\bu \in \bbS^{d-1}$.

\paragraph{Generative model} Given a data set
$\mathcal{D} = \{\bx_1, \bx_2, \ldots, \bx_n\}$ of $n$ samples
$\bx_\mu \in \reals^d$ with zero mean, we can build a simple (Gaussian)
generative model by estimating the empirical covariance matrix directly from the
inputs:
\begin{equation}
  \label{app:eq:cov_emp}
  \hat{\Sigma}(\mathcal{D}) = \frac{1}{n} \sum_{\mu=1}^n \bx_\mu \bx_\mu^\top + \sigma^2 \Id_d,
\end{equation}
where we added a regularization $\sigma > 0$. 
We sample from the model by mapping a latent variable $\bz \sim \mcN(0, \Id_d)$ from
the latent space to the ``image space'' via
\begin{equation}
  \tilde \bx(z) = \hat{\Sigma}^{\nicefrac{1}{2}} \bz,
\end{equation}
where $\Sigma^{\nicefrac{1}{2}}$ is the matrix square root of the empirical covariance
matrix.

\subsection{Memorisation}
\label{app:sec:pca_memorisation}

For a generative model trained on a given data set, we
quantify memorisation via the maximal normalised overlap between a sample
generated from the latent vector $\bz \sim \mcN(0, \Id_d)$ and the training samples. For the $\mu$th
training sample, we have
\begin{equation}\label{eq:def_mmu}
  m^\mu(\bz) \coloneqq \frac{|\langle \bx_\mu, \tilde{\bx}(\bz)\rangle|}{\norm{\bx_\mu} \norm{\tilde{\bx}(\bz)}}.
\end{equation}
From there, we define the memorisation overlap $m$ as the maximum of sample
overlaps $m^\mu(\bz)$, averaged over the distribution of latent
vectors\footnote{Since $\max_\mu m^\mu(\bz)$ is the maximum over a set of
  Gaussian random variables with variance $1 / d$ (the overlaps), $m$ grows as
  $m \sim \sqrt{\nicefrac{2}{d} \log n}$, where $n$ is the training set, even if
  the generator is independent of the training set. It is therefore tempting to
  subtract $\sqrt{\nicefrac{2}{d} \log n}$ from the definition of $m$, but as we
  see in \cref{sec:m}, $m\to 0$ before this correction becomes
  relevant.},
\begin{equation}
  m \coloneqq \EE_\bz \max_\mu m^\mu(\bz). 
\end{equation}
Let us sketch our analysis for the behaviour of $m$ for finite $n \geq 1$ as $d \to \infty$,
as the computations follow very standard techniques.

\myskip
\textbf{Limiting behaviour of $\hSigma$ --}
Without loss of generality we can assume that $\bu = \bbe_1 = (1, 0, \cdots, 0)$.
Letting $\alpha(\beta) \coloneqq \sqrt{1+\beta} - 1 \geq 0$, we have $\sqrt{\Sigma} = \Id_d + \alpha(\beta) \bu \bu^\T$.
This allows to decompose the distribution of $\bx_\mu$ as 
\begin{equation}\label{eq:decomp_xmu}
    \bx_\mu = \bu_\mu + \alpha(\beta) u_{\mu,_1} \bbe_1,
\end{equation}
where $\bu_\mu = (u_{\mu, i})_{i=1}^d \sim \mcN(0, \Id_d)$ and $(\bu_1, \cdots, \bu_n)$ are independent.
We can decompose 
\begin{equation}\label{eq:hSigma}
    \hSigma - \sigma^2 \Id_d
    = \frac{1}{n} \sum_{\mu=1}^n \bx_\mu \bx_\mu^\T 
    = \underbrace{\frac{1}{n} \sum_{\mu=1}^n \bu_\mu \bu_\mu^\T}_{\eqqcolon \hSigma_0} + \frac{\alpha(\beta)^2}{n} \sum_{\mu=1}^n u_{\mu,1}^2 \bbe_1 \bbe_1^\T + \frac{\alpha(\beta)}{n} \sum_{\mu=1}^n u_{\mu,1} (\bu_\mu \bbe_1^\T + \bbe_1 \bu_\mu^\T).
\end{equation}
$\hSigma_0$ is the covariance matrix of $n$ standard Gaussian vectors in dimension $d$.
In the regime $d \to \infty$ while $n$ remains finite, it is a classical result
(see e.g.~\citet{vershynin2018high}, chapter 3) that 
\begin{equation*}
    \left\|\frac{n}{d} \hSigma_0 - P_U \right\|_\op \to 0
\end{equation*}
as $d \to \infty$. Here $\|M\|_\op \coloneqq \max_{\|\bu\|=1} \|M \bu\|_2$ is the operator norm, and $P_U$ is the orthogonal projector on $U \coloneqq \Span(\bu_1, \cdots, \bu_n)$, a uniformly-sampled random $n$-dimensional subspace of $\bbR^d$. 
One can easily bound the spectral norm of the two last finite-rank terms in \cref{eq:hSigma}:
all in all, we reach that, for $U$ a uniformly-sampled random $n$-dimensional subspace of $\bbR^d$, we have as $d \to \infty$:
\begin{equation}\label{eq:cv_hSigma}
    \left\|\frac{n}{d} (\hSigma - \sigma^2 \Id_d) - P_U \right\|_\op \to 0,
\end{equation}
and furthermore notice that $\Im(\hSigma - \sigma^2 \Id_d) \subseteq V \coloneqq \Span(U, \{\bbe_1\})$.

\myskip 
\textbf{Memorisation at finite $n$ --}
Let us come back to \cref{eq:def_mmu}.
Using \cref{eq:cv_hSigma}, the remark below it, and $\|\tbx(\bz)\|^2 = \bz^\T \hSigma \bz$, we reach that 
\begin{equation*}
    \left|\frac{\|\tbx(\bz)\|^2}{d} - \frac{1}{n} \bz^\T P_U \bz - \sigma^2\right| \leq \frac{\|P_{V} \bz\|^2}{n}  \left\|\frac{n}{d} (\hSigma - \sigma^2 \Id_d) - P_U \right\|_\op +  \sigma^2\left|\frac{\|\bz\|^2}{d} - 1\right|  \to 0,
\end{equation*}
where 
we used that $P_V \bz$ is a standard Gaussian vector in $V$, a subspace of dimension at most $n+1$.
Concerning the numerator in \cref{eq:def_mmu}, one can tackle it in very similar ways (using that $\sqrt{\sigma^2 \Id_d + (d/n) P_U} = \sigma \Id_d + (\sqrt{d/n + \sigma^2} - \sigma) P_U$), and we reach that for any $\mu \in [n]$:
\begin{equation}\label{eq:cv_mz}
    \left|
     \frac{|\langle \bx_\mu, \tilde{\bx}(\bz)\rangle|}{\norm{\bx_\mu} \norm{\tilde{\bx}(\bz)}}  
    - \frac{|\bx_\mu^\T P_U \bz|}{\|\bx_\mu\|\sqrt{\bz^\T P_U \bz + n \sigma^2}}\right| \to 0.
\end{equation}
For a fixed $n$ as $d \to \infty$, since $\|\bu_\mu\|/\sqrt{d} \to 1$, and using \cref{eq:decomp_xmu} we have as $d \to \infty$: 
\begin{equation*}
   \left| \frac{\bx_\mu^\T P_U \bz}{\|\bx_\mu\|} - \bz^\T \bomega_\mu \right| \to 0.
\end{equation*}
We defined $\bomega_\mu \coloneqq \bu_\mu / \|\bu_\mu\|$, which are a finite number of vectors that become asymptotically orthonormal in the high-dimensional limit: $\bomega_\mu \cdot \bomega_\nu \to 0$. 
All in all, we reach that as $d \to \infty$ while $n \geq 1$ remains finite:
\begin{equation}\label{app:eq:r_smalln}
    m \sim \EE_{z_1, \cdots, z_n \iid \mcN(0,1)} \left[\frac{\max_{1 \leq \mu \leq n} |z_\mu|}{\sqrt{n \sigma^2 + \sum_{\mu=1}^n z_\mu^2}}\right].
\end{equation}
Taking $n \gg 1$ in \cref{app:eq:r_smalln} (\emph{after} $d \to \infty$), one gets
\begin{align}\label{app:eq:r_n_order1}
    m \simeq \sqrt{\frac{2 \log n}{n(1+\sigma^2)}}
\end{align}
in this limit. In particular the transition from $m$ being close to $1$ to close to $0$ occurs on the scale
$n = \Theta(1)$.

\myskip

\subsection{Convergence}%
\label{app:theory-convergence}

To quantify the convergence of the generative models 
instead, \citet{kadkhodaie2024generalization} computed the average overlap
between two samples generated by the two independently trained generative models
from the same latent variable $\bz$. In our setup, given two independently drawn data sets $\mathcal{D}^{(b)}, b \in \{0, 1\}$ of $n$
samples $\bx^{(b)}_{\mu=1,\ldots,n} \iid \mathsf{P}^\star$,
where~$\mathsf{P}^\star$ is the ground truth distribution of the inputs, and a set of
latent variables $\bz_i \iid \mcN(0, \Id_d)$, we define convergence via the overlap 
\begin{equation}\label{eq_app:def_q}
  q \coloneqq \EE_\bz \frac{|\langle \bx^{(0)}(\bz), \bx^{(1)}(\bz) \rangle|}{\norm{\bx^{(0)}(\bz)} \norm{\bx^{(1)}(\bz)}},
\end{equation}
where $\bx^{(b)}(\bz)$ are two samples drawn from the generative model fitted to
the datasets $\mathcal{D}^{(b)}$, respectively, given the same latent vector
$\bz$.
We sketch below the computation of the limiting behaviour of $q$ as $d \to \infty$ for different regimes of $n$, and $\mathsf{P} = \mcN(0, \Id_d + \beta \bu \bu^\T)$. We consider $\sigma = 0$, and detail in the end how to generalize the computation to $\sigma \geq 0$.

\subsubsection{The finite-$n$ regime} 

In the regime $n = \Theta(1)$ as $d \to \infty$, one can generalize the argument detailed above for the behaviour of the memorization parameter $m$. 
We sketch the argument at a heuristic level below. Recall that 
we showed that, to leading order in this limit $\hSigma \sim (d/n) P_U$, where $P_U$ is an orthogonal projectors on $U$, a uniformly-sampled random $n$-dimensional subspace of $\bbR^d$. 
Applying this to the two independent datasets, we reach that 
\begin{equation*}
    \frac{|\langle \bx^{(0)}(\bz), \bx^{(1)}(\bz) \rangle|}{\norm{\bx^{(0)}(\bz)} \norm{\bx^{(1)}(\bz)}} \sim \frac{|\bz^\T P_U P_V \bz|}{\sqrt{\bz^\T P_U \bz} \sqrt{\bz^\T P_V \bz}}.
\end{equation*}
and where $P_U$, $P_V$ are orthogonal projectors on $U, V$, two independently-sampled random $n$-dimensional subspaces of $\bbR^d$.
If we denote $P_U = UU^T$, where $U \in \bbR^{d \times n}$ 
is a matrix whose $n$ columns $(\bu_1, \cdots, \bu_n)$ are Haar-distributed random orthonormal vectors (and similarly for $P_V$), we reach:
\begin{equation*}
    \frac{|\langle \bx^{(0)}(\bz), \bx^{(1)}(\bz) \rangle|}{\norm{\bx^{(0)}(\bz)} \norm{\bx^{(1)}(\bz)}} \sim \frac{|\bx^\T U^\T V \by|}{\|\bx\| \|\by\|},
\end{equation*}
where $\bx, \by$ are vectors in $\bbR^n$ drawn from a Gaussian distribution with zero mean, and covariance matrix 
\begin{equation*}
    \begin{pmatrix}
        \Id_n & U^\T V \\ 
        V^\T U & \Id_n
    \end{pmatrix}.
\end{equation*}
For $n = \Theta(1)$ as $d \to \infty$, 
the matrix $U^\T V$ has elements 
$(U^\T V)_{\mu \nu} = \bu_\mu \cdot \bv_\nu$.
Thus by the central limit theorem $U^\T V \sim G / \sqrt{d}$, where $G \in \bbR^{n \times n}$ is a matrix with i.i.d.\ standard Gaussian elements. 
All in all we reach that to leading order 
\begin{equation*}
    \frac{|\langle \bx^{(0)}(\bz), \bx^{(1)}(\bz) \rangle|}{\norm{\bx^{(0)}(\bz)} \norm{\bx^{(1)}(\bz)}} \sim \frac{|g|}{\sqrt{d}},
\end{equation*}
where $g \sim \mcN(0,1)$. In other words, in this regime $n = \Theta(1)$ the correlation between the two samples is the same as the correlation between two randomly-sampled vectors in $\bbR^d$.


\subsubsection{Linear regime $n \asymp d$}
\label{app:sec:q_linear_regime}

At linear sample complexity, the generalisation overlap $q$ has a simple expression as well. 
Indeed, in this regime the matrices $\hat{\Sigma}^{(0)}, \hat{\Sigma}^{(1)}$ have well-defined asymptotic spectra, given by a (shifted) Marchenko-Pastur law, as described below.
In particular, by the law of large numbers we have that both the numerators and denominators in \cref{eq_app:def_q} concentrate on deterministic values:
\begin{equation*}
  q = \EE \frac{\tr
      \left[\sqrt{\hat{\Sigma}^{(0)}}\sqrt{\hat{\Sigma}^{(1)}}\right]}{\sqrt{\tr
        \hat{\Sigma}^{(0)} \; \tr \hat{\Sigma}^{(1)}}} + \smallO(1).
\end{equation*}
We denoted $\tr(M) \coloneqq (1/d) \Tr[M]$ for $M$ a $d \times d$ matrix.
Notice that we removed the absolute value in the numerator, since the product of two positive-semidefinite matrices has a non-negative trace.
Further, we have as $d \to \infty$ the strong concentration:
\begin{equation*}
    \tr[\hSigma^{(0)}] = \frac{1}{nd} \sum_{\mu=1}^n \|\bx_\mu\|^2 \to 1.
\end{equation*}
Thus as $d \to \infty$:
\begin{equation}
  \label{app:eq:q_proportional}
  q = \EE \, \tr
      \left[\sqrt{\hat{\Sigma}^{(0)}}\sqrt{\hat{\Sigma}^{(1)}}\right] + \smallO(1)
\end{equation}
Recall the decomposition in \cref{eq:hSigma}: the low-rank perturbations involving $\alpha(\beta)$ do not contribute to the asymptotic spectrum, and one can easily check that, up to a $\smallO(1)$ correction, we can replace \cref{app:eq:q_proportional} by:
\begin{equation}
  \label{eq_app:q_proportional_2}
  q = \EE\, \tr
      \left[\sqrt{\frac{1}{n} \sum_{\mu=1}^n \bu_\mu \bu_\mu^\T}\sqrt{\frac{1}{n} \sum_{\mu=1}^n \bv_\mu \bv_\mu^\T}\right] + \smallO(1),
\end{equation}
where $\bu_\mu, \bv_\mu$ are independent vectors sampled from $\mcN(0, \Id_d)$.
We can exploit this to derive an
exact expression for $q$ as $n, d \to \infty$.  Let $n = \gamma d$, and denote $\mu_\gamma$ the
Marchenko-Pastur distribution with ratio~$\gamma$:
\begin{align}
  \label{app:eq:def_mugamma}
  \mu_\gamma(x) \coloneqq \max(0, 1-\gamma) \delta(x) + \gamma \frac{ \sqrt{(\lambda_+(\gamma) - x)(x - \lambda_-(\gamma))}}{2 \pi x} \indi\{\lambda_-(\gamma) \leq x \leq \lambda_+(\gamma)\},
\end{align}
with 
\begin{align}
  \label{app:eq:mp_edges}
  \lambda_\pm(\gamma) \coloneqq (1 \pm \gamma^{-1/2})^2.
\end{align}
It is known that the limiting spectral distribution of the empirical covariance matrix $(1/n) \sum_{\mu=1}^n \bu_\mu \bu_\mu^\T$ converges to $\mu_\gamma$ as $n, d \to \infty$~\cite{marchenko1967distribution}, and the same holds for the empirical covariance matrix of $(\bv_\mu)_{\mu=1}^n$.
Moreover, notice that both matrices are \emph{rotationally-invariant}: in particular one checks easily that their eigenvectors form a uniformly-sampled (Haar-distributed) orthogonal matrix, and are independent from their eigenvalues~\cite{potters2020first}.
Using the eigendecomposition of the two matrices appearing in \cref{eq_app:q_proportional_2}, we therefore reach:
\begin{align}
  \label{app:eq:q_evalues}
  q  = \frac{1}{d} \EE \sum_{i,j} \sqrt{\lambda_i^{(0)} \lambda_j^{(1)}} \langle \bomega_i^{(0)}, \bomega^{(1)}_j\rangle^2 + \smallO(1).
\end{align}
In \cref{app:eq:q_evalues}, the eigenvectors $\bomega^{(a)}$ (for $a \in \{1, 2\}$) form a Haar-distributed orthogonal matrix, and are independent of the eigenvalues $(\lambda_i^{(a)})$. Since $\EE\left(\bomega_i^{(0)} \cdot \bomega_j^{(1)}\right)^2 = 1/d$ for all $i,j$, we get:
\begin{align*}
    q  =\prod_{b \in \{1, 2\}}\left(\frac{1}{d} \EE \sum_{i=1}^d \sqrt{\lambda_i^{(b)}}\right) + \smallO(1).
\end{align*}
Taking the large-$d$ limit finally gives:
\begin{align}\label{app:eq:q_n_order_d}
    q \simeq \left(\int \mu_\gamma(\rd x) \, \sqrt{x}\right)^2,
\end{align}
which is the expression we plot in \cref{fig:figure1}. 
Notice that for $\gamma \to \infty$, $\mu_\gamma(x) \to \delta(x-1)$, and so $q \to 1$ as $n / d \to \infty$. 
Conversely, 
as $\gamma \to 0$, $\mu_\gamma$ has mass $(1-\gamma)$ at $0$, and a mass $\gamma$ in a region centered around $1/\gamma$, with width $\mcO(1/\sqrt{\gamma})$. This implies that $q \sim \gamma$ in the limit $\gamma \downarrow 0$.
In particular, the transition from an overlap $q$ close to $0$ to an overlap close to $1$ occurs in the scale $n = \Theta(d)$.

\myskip
For the regularised estimator with $\sigma \geq 0$, we can repeat the calculation
above by noting that all the eigenvalues are shifted by $\sigma^2$ -- the
regulariser effectively sets a lower limit of $\sigma^2$ on the smallest
eigenvalue. A quick calculation along the same lines as above then shows that
the convergence order parameter $q$ is given by
\begin{equation}
  q = \frac{1}{1 + \sigma^2}\left(\int \mu_\gamma(\rd x) \sqrt{x + \sigma^2}\right)^2.
\end{equation}

\subsection{Subspace recovery and the BBP transition}%
\label{app:sec:bbp}

The derivations above have a striking consequence: neither the memorization order parameter $m$ nor the convergence overlap $q$ depend, to leading order, on the signal-to-noise ratio $\beta$. More formally, these parameters do not depend on the small-rank changes in the covariance $\Sigma$ of the data, but only on its asymptotic spectrum. 

\myskip
As stated in the main text,
this suggests to introduce a third parameter $Q \in [0,1]$ to measure generalisation, which is the overlap between the two top eigenvectors of $\hat{\Sigma}^{(0)}$ and $\hat{\Sigma}^{(1)}$:
\begin{equation}
    Q \coloneqq |\langle \bv_{\max}(\hat{\Sigma}^{(0)}), \bv_{\max}(\hat{\Sigma}^{(1)}) \rangle |.
\end{equation}
We can obtain $Q$ from the classical literature on the so-called ``Baik Ben-Arous Péché'' (BBP) transition~\cite{baik2005phase}.
We now recall this result, as stated in~\cite{benaych2011eigenvalues}. 
Recall $\hat{\Sigma} = (1/n) \sum_{\mu=1}^n \bx_\mu \bx_\mu^\T$, with $\bx_\mu \iid \mcN(0, \Id_d + \beta \bu \bu^\T)$. 
We define 
\begin{align}
  \label{app:eq:delta_BBP}
  \delta(\beta,\gamma) \coloneqq
  \begin{dcases}
    0 & \textrm{ if } \gamma < \gamma_c(\beta) \coloneqq \beta^{-2}, \\
    \sqrt{\frac{\beta^2 \gamma - 1}{\beta(1 + \beta \gamma)}} & \textrm{ if }
    \gamma \geq \gamma_c(\beta).
  \end{dcases}
\end{align}
Then one has that the top eigenvector of $\hSigma$ satisfies: 
\begin{align}
  \label{app:eq:vmax}
    \lim_{d \to \infty} |\langle \bv_{\max}(\hat{\Sigma}) , \bu \rangle| = \delta(\beta,\gamma),
\end{align}
and moreover, the distribution of $ \bv_{\max}(\hat{\Sigma})$ is invariant by any orthogonal transformation $O$ such that $O\bu = \bu$.
For two independent draws $\hSigma^{(0)}, \hSigma^{(1)}$, we 
thus reach that
\begin{align}\label{app:eq:Q_delta}
\nonumber
    Q &= 
    |\langle \bv_{\max}(\hSigma^{(0)}) , \bu \rangle 
    \langle \bv_{\max}(\hSigma^{(1)}) , \bu \rangle
    + \langle P_{\bu^\perp}\bv_{\max}(\hSigma^{(0)}) , 
    P_{\bu^\perp}\bv_{\max}(\hSigma^{(1)}) \rangle|, \\ 
    &= \delta(\beta,\gamma)^2 + \smallO(1).
\end{align}
Notice in particular that $Q=0$ for $n < n_c(\beta) \sim \gamma_c(\beta) d$, while $Q \to 1$ for $\gamma = n /d \to \infty$. 

\clearpage

\section{From overlaps to statistical distances}

\subsection{Kullback-Leibler divergence and total variation distance are not
affected by a finite-rank perturbation of the covariance}%
\label{app:sec:what-spike}

We first give a brief argument that shows that neither the Kullback-Leibler nor
the total variation distance between two multivariate normal distributions is
affected by the presence of a rank-one perturbation to the covariance (i.e., to
a spike).

For any injective transformation $f : \bbR^d \to \bbR^d$ and any probability
distributions $P, Q$ on $\bbR^d$, we have (denoting $f \# P$ the push-forward
measure, i.e.\ the law of $f(X)$ for $X \sim P$)
\begin{align}\label{eq:pushf_distance}
\begin{dcases}
    \rd_{\rm KL}(P || Q) &= \rd_{\rm KL}(f\#P || f\#Q), \\
    \rd_{\rm TV}(P,Q) &= \rd_{\rm TV}(f\#P, f\#Q).
\end{dcases}
\end{align}
Let $P^\star = \mcN(0, \Sigma^\star)$ and $P = \mcN(0, \hat{\Sigma})$. Notice that
$P^\star = M_{\sqrt{\Sigma^\star}} \# \mcN(0, \Id_d)$, where $M_{\sqrt{\Sigma^\star}}(\bx)
\coloneqq \sqrt{\Sigma^\star} \, \bx$ is just the left-multiplication by
$\sqrt{\Sigma^\star}$ (which is injective since $\Sigma^\star = \Id + \beta \bu \bu^\T$ is
full-rank). Moreover, 
\begin{equation}
  \hat{\Sigma} = \frac{1}{n}\sum_{i=1}^n \bx_i \bx_i^\T = \sqrt{\Sigma^\star} \left(\frac{1}{n}  \sum_{i=1}^n \bz_i \bz_i^\T \right) \sqrt{\Sigma^\star},
\end{equation}
where $\bz_i \iid \mcN(0, \Id_d)$. Denoting $S \coloneqq (1/n) \sum_{i=1}^n
\bz_i \bz_i^\T$, we have $P = M_{\sqrt{\Sigma^\star}} \# \mcN(0, S)$. Applying
eq.~\eqref{eq:pushf_distance} we get 
\begin{equation}
    \rd_{\rm TV}(P, P^\star) = \rd_{\rm TV}(\mcN(0, \Id_d), \mcN(0, S)),
\end{equation}
and similarly for the KL divergence. Neither quantity hence depends on the
presence of the spike, which leads us to consider the maximum-sliced distance to
elucidate the role of $Q$ in \cref{app:sec:MS-Q}.

\subsection{Bounds on $\DKL$}%
\label{app:sec:q_DKL_proof}

We start with an analysis of the (rescaled) Kullback-Leibler $\DKL$ divergence
between ground-truth and estimated distributions. From our previous argument,
the Kullback-Leibler divergence is not affected by the presence of a spike in
the covariance, so to leading order we can consider the case $\beta = 0$. Since
there will always be a slight mismatch between the ground-truth and the
estimated distribution even when $n$ grows large due to the regularisation, we
consider an ``excess'' $\DKL$ between the learnt and the ``regularised''
ground-truth covariance $\Sigma^* + \sigma^2 \Id_d$:
\begin{align*}
  \mathcal{D} & \coloneqq \frac{1}{d} \EE \DKL \left[\mcN(0, \hat{\Sigma}) | \mcN\left(0, (1 + \sigma^2)\Id_d\right) \right] \\
  & =  \frac{1}{2d} \EE \left((1 + \sigma^2)^{-1} \tr \hat{\Sigma} - d + \log \det (1 + \sigma^2)\Id_d - \log \det \hat{\Sigma} \right)\\
  &=-\frac{1}{2} \mathbb{E}_\gamma \log \nu \label{eq:dkl_nu}
\end{align*}
where we have used the fact that $\EE_\gamma \lambda = 1$ and we have introduced
the rescaled eigenvalue
\begin{equation}
  \label{eq:nu}
  \nu \coloneqq \frac{\lambda + \sigma^2}{1 + \sigma^2},
\end{equation}
to account for the shift of all the eigenvalues $\lambda$ of $\hat{\Sigma}$ by
$\sigma^2$. The normalisation ensures that~$\EE_\gamma \nu = 1$. Note that one can then derive 
an exact formula for $\mcD$ that exploits the Stieltjes transform of the
Marchenko-Pastur distribution, see eq.~(4.40) of \citet{potters2020first}.
Instead of writing such an exact expression, here we are interested in the relation
between $\mathcal{D}$ and the convergence overlap $q$, which we establish via
the following explicit upper and lower bounds.

\subsubsection{Lower bound}

Our lower bound is a direct consequence of Jensen's inequality. Noting that we
can rewrite the convergence overlap in terms of $\nu$ as $q = {\left(\EE_\gamma
\sqrt \nu\right)}^2$, we have that $\EE_\gamma \log \nu = 2 \EE_\gamma \log
\sqrt \nu \le 2 \log \EE \sqrt \nu = \log q$, so 
\begin{equation}\label{eq:d_lb}
  -\EE_\gamma \log \nu \ge - \log q \Rightarrow 
  \mathcal{D} \ge -\frac{1}{2}\log q.
\end{equation}

\subsubsection{Upper bound}
\label{app:sec:dkl_bound_all_gamma}

We now derive the following bound on the Kullback-Leibler divergence which is true for any $\gamma > 0$:
\begin{equation}\label{eq:ub_D_alt}
    \mcD \leq 2 \left[1 + \log(1 + \sigma^{-2})\right] (1-q).
\end{equation}

\begin{proof}[Proof of \cref{eq:ub_D_alt}]
Let $r = \sqrt{\nu}$.
Notice that (for any $\gamma > 0$) we have $r_{\min} \leq r \leq r_{\max}$ with $r_{\min}^2 = \sigma^2 / (1 + \sigma^2) \in (0,1)$ and $r_{\max}^2 = (\lambda_+ + \sigma^2)/(1+\sigma^2) > 1$.
Moreover, $q = \EE[r]^2$ and $\EE[r^2] = 1$.
We use the following elementary bounds proven below.
\begin{lemma}\label{lemma:ub_log}
\begin{itemize}
    \item[$(i)$] For any $a > 0$, and any $x \geq a$: 
    \begin{equation*}
        -\log x \leq \left[\frac{a-1 - \log a}{(a-1)^2}\right] (x-1)^2 - (x-1).
    \end{equation*}
    \item[$(ii)$] For any $x \in (0,1]$: 
    \begin{equation*}
        \frac{x^2 - 1 - 2\log x}{(x-1)^2} \leq 2[1-\log x].
    \end{equation*}
\end{itemize}
\end{lemma}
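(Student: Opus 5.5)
For part $(i)$ I plan to reduce the inequality to monotonicity of a single-variable function. Set $c(a) \coloneqq (a-1-\log a)/(a-1)^2$ for $a\neq 1$, extended by continuity with $c(1)=1/2$. The claim is equivalent to
\begin{equation*}
  h(x) \coloneqq \frac{x-1-\log x}{(x-1)^2} \leq c(a) = h(a), \qquad x \geq a,\ x \neq 1,
\end{equation*}
because rearranging $-\log x \leq c(a)(x-1)^2-(x-1)$ gives exactly $x-1-\log x \leq c(a)(x-1)^2$, and at $x=1$ both sides vanish. It therefore suffices to show that $h$ is non-increasing on $(0,\infty)$.

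To prove this, I write $x-1-\log x = \int_1^x (1-1/t)\,\rd t$. Substituting $t = 1+s(x-1)$ gives
\begin{equation*}
  h(x) = \int_0^1 \frac{s}{1+s(x-1)}\,\rd s .
\end{equation*}
This identity holds for every $x>0$, including $x<1$, since $1+s(x-1)>0$ on $[0,1]$. For each fixed $s\in(0,1]$ the integrand is decreasing in $x$, so $h$ is decreasing. Consequently $h(x)\leq h(a)$ whenever $x\geq a$, which proves $(i)$.

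For part $(ii)$ I expand the numerator as $x^2-1-2\log x = (x-1)^2 + 2(x-1-\log x)$. The left-hand side then equals $1+2h(x)$, and the claim becomes $1+2h(x)\leq 2-2\log x$, that is,
\begin{equation*}
  h(x) \leq \tfrac12 - \log x \qquad \text{for } x\in(0,1].
\end{equation*}
At $x=1$ this reads $h(1)=1/2$ and holds with equality. For $x<1$, I use $1/(1+s(x-1))\leq 1/x$ in the integral representation. This yields only $h(x)\leq 1/(2x)$, which is too weak as $x\to0$, so I need a sharper estimate. Instead I compute the integral exactly. With $y=1-x\in(0,1)$,
\begin{equation*}
  \int_0^1 \frac{s}{1-sy}\,\rd s = \frac{-\log(1-y)-y}{y^2},
\end{equation*}
which is of course the definition again. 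The task is then to show that
\begin{equation*}
  k(x) \coloneqq (x-1-\log x) - (1-x)^2\left(\tfrac12-\log x\right) \leq 0 \qquad \text{on } (0,1].
\end{equation*}
Since $k(1)=0$, it suffices to show $k'\geq 0$ on $(0,1)$. A direct computation gives
\begin{equation*}
  k'(x) = 1-\frac1x + (1-x)(1-2\log x) + \frac{(1-x)^2}{x} = (1-x)\bigl(1-2\log x\bigr) + \frac{(1-x)^2-(1-x)}{x}.
\end{equation*}
The last term simplifies to $-(1-x)$, so $k'(x) = -2(1-x)\log x$. This is non-negative on $(0,1)$ because $\log x\leq 0$ there. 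Hence $k$ is increasing on $(0,1]$ with $k(1)=0$, so $k\leq 0$ on $(0,1]$, which proves $(ii)$.

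The main obstacle is finding the right reformulation in each part. For $(i)$, the integral representation of $h$ gives monotonicity immediately. For $(ii)$, the cruder bound via $1/x$ fails near $x=0$, and the exact derivative $k'(x) = -2(1-x)\log x$ is what makes the argument work.
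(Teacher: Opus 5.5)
Your proof is correct. Part $(ii)$ follows essentially the paper's argument. Your $k$ is $-g/2$, where $g(x)=2(1-\log x)(x-1)^2-(x^2-1-2\log x)$ is the paper's auxiliary function, and your $k'(x)=-2(1-x)\log x$ is the paper's $g'(x)=4(1-x)\log x$ up to that factor. The detour through the crude bound $1/(1+s(x-1))\le 1/x$ and the ``exact'' evaluation of the integral does no harm, but it is unnecessary and can be deleted.

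Part $(i)$ takes a genuinely different route. The paper works with $f(x)=A(x-1)^2-(x-1)+\log x$ and notes $f(1)=f(a)=0$. It factors $f'(x)=(x-1)(2Ax-1)/x$, uses Rolle's theorem to place the second critical point $1/(2A)$ strictly between $1$ and $a$, and then runs a sign analysis split into the cases $a<1$ and $a\ge 1$.

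You instead divide by $(x-1)^2$ and prove a stronger fact. The function $h(x)=(x-1-\log x)/(x-1)^2=\int_0^1 s/(1+s(x-1))\,\mathrm{d}s$ is strictly decreasing on all of $(0,\infty)$. This gives you three things:
\begin{itemize}
\item a case-free argument that handles $a=1$ and $x=1$ automatically;
\item a proof that $c(a)=\sup_{x\ge a}h(x)$ is the optimal constant for the quadratic bound on $[a,\infty)$;
\item via the identity $(x^2-1-2\log x)/(x-1)^2=1+2h(x)$, an explicit link to the application: the constant $2c(r_{\min})+1$ produced by $(i)$ is exactly the left-hand side of $(ii)$.
\end{itemize}
The paper's version needs nothing beyond differentiating $f$, at the cost of the Rolle step and the case distinction.
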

\noindent
Applying Lemma~\ref{lemma:ub_log}-$(i)$ to $\mcD = - \EE \log r$, we get 
\begin{align*}
    \mcD &\leq \left[\frac{2(r_{\min}-1 - \log r_{\min})}{(r_{\min}-1)^2} + 1\right] (1-\sqrt{q}) \leq \left[\frac{r_{\min}^2-1 - 2\log r_{\min}}{(r_{\min}-1)^2}\right] (1-q).
\end{align*}
Applying then Lemma~\ref{lemma:ub_log}-$(ii)$, we get eq.~\eqref{eq:ub_D_alt}.
\end{proof}
\begin{proof}[Proof of Lemma~\ref{lemma:ub_log} --]
Start with $(i)$ and 
fix $a > 0$. Let $A = (a - 1 - \log a)/(a - 1)^2$, with $A = 1/2$ if $a=1$ by continuity. We want to show that $f(x) = A(x-1)^2 - (x-1) + \log x \geq 0$ for all $x \geq a$. 
Observe that $f(1) = 0$ and $f(a) = 0$. The derivative is:
\begin{equation*}
f'(x) = 2A(x-1) - 1 + \frac{1}{x} = \frac{(x-1)(2Ax - 1)}{x}.
\end{equation*}
The roots of $f'(x)$ are $\{1, 1/(2A)\}$. Let $z \coloneqq (2A)^{-1}$. Since $f(1)=f(a)=0$, we must have $\min(1, a) < z < \max(1, a)$.
\begin{itemize}[leftmargin=*]
    \item \textbf{Case $a < 1$:} We have $a < z < 1$. 
    Notice that $f'(a) > 0$, so that $f$ is strictly increasing on $(a, z)$ and strictly decreasing on $(z, 1)$, and thus $f(x) \geq 0$ on $[a, 1]$. 
    Moreover, for any $x \geq 1$, $f'(x) \geq 0$, and thus $f(x) \geq 0$ for all $x \geq a$.
    \item \textbf{Case $a \geq 1$:} We have $1 \leq z \leq a$. 
    Thus for any $x \geq a$, we have $f'(x) \geq 0$, and thus $f(x) \geq 0$ for all $x \geq a$.
    This completes the proof of (i).
\end{itemize}
Let us now prove $(ii)$.
We aim to show that $g(x) \geq 0$ for $x \in (0, 1]$, where:
\begin{align*}
g(x) = 2(1 - \log x)(x-1)^2 - (x^2 - 1 - 2\log x) = x^2 - 4x + 3 - 2x(x-2)\log x.
\end{align*}
Notice that $g(1) = 0$.
Moreover an explicit computation yields $g'(x) = 4(1-x)\log x \leq 0$, 
and thus $g(x) \geq 0$ for $x \in (0, 1]$.
\end{proof}

\paragraph{Remark on the regime $\gamma \geq 1$} We note that one can also refine this bound as follows: using the fact that $r_{\min}$ is bigger for $\gamma \geq 1$, we immediately get
\begin{equation}\label{eq:ub_D_alt_2}
    \mcD \leq 2 \left[1 - \log\left(\frac{\lambda_- \indi\{\gamma \geq 1\} +\sigma^2}{1+\sigma^2}\right)\right] (1-q).
\end{equation}
In particular, the multiplicative term in front of $(1-q)$ goes to $2$ for large $\gamma$ and any finite $\sigma \geq 0$ (even for $\sigma = 0$).
Notice that this term diverges for $\gamma = 1$ and $\sigma \downarrow 0$, while $\mcD$ itself does not diverge: this is due to the sub-optimality of our bound, which only exploits the first two moments and the support of the distribution. While one could surely refine the bounds further, eq.~\eqref{eq:ub_D_alt_2} and eq.~\eqref{eq:d_lb} already show that, 
as $\gamma \to \infty$ (and thus $q \to 1$), $\mcD$ decays as $(1-q)$ up to constant factors.

\subsection{Maximum-sliced distance and its decay with $Q$}%
\label{app:sec:MS-Q}

\subsubsection{Definition of maximum-sliced distance}

In this section, we focus on the maximum-sliced distance $\MS$ between
$\mathsf{P}^\star$ and the estimated density $\hat{\mathsf{P}} = \mcN(0,
\hat{\Sigma})$. To define $\MS$, we first define the integral probability metric (IPM)
$d(P, Q)$ between two probability measures $P$ and $Q$ on $\reals$,
\begin{align}\label{eq:d_IPM}
  d(P,Q) \coloneqq \sup_{f\, : \, \|f''\|_\infty \le 1} \left| \mathbb{E}_P[f] - \mathbb{E}_Q[f]\right|, 
\end{align}
where $\|f\|_\infty$ is the uniform norm of $f$. The definition of $d(P, Q)$ has
a similar form to total variation distance, except that the $\sup$ is not over
events (i.e.\ over indicator functions), but over smooth functions. One can show
that $d$ is finite if $P, Q$ have the same mean (which one can take equal to $0$
without loss of generality), and that $d$ forms a distance over the set of
zero-mean probability distributions. The definition \cref{eq:d_IPM} has the
advantage over total variation that it yields a very simple closed-form
expression for normal distributions. For $P = \mcN(0, \sigma^2)$ and $Q =
\mcN(0, \rho^2)$, it is easy to show that:
\begin{equation}
  \label{eq:d_gaussian_1d}
  d(P, Q) = \frac{1}{2} |\sigma^2 - \rho^2|.
\end{equation}

\begin{proof}[Proof of \cref{eq:d_gaussian_1d}]
    Notice that $d(P, Q) \geq |\rho^2 - \sigma^2|/2$ trivially by taking $f(x) = x^2/2$, we thus focus on the other bound.
    Assume without loss of generality that $\sigma^2 \ge \rho^2$. Let $Y \sim \mcN(0, \rho^2)$, 
    and $Z \sim \mathcal{N}(0, \sigma^2 - \rho^2)$ Then $X \coloneqq Y + Z \sim \mcN(0, \sigma^2)$.
    Let $f$ be such that $\|f''\|_\infty \le 1$. Applying a second-order Taylor expansion we obtain:
    \begin{equation*}
        f(Y+Z) = f(Y) + Z f'(Y) + \frac{1}{2} Z^2 f''(\xi)
    \end{equation*}
    where $\xi$ is a random variable lying strictly between $Y$ and $Y+Z$.
    Taking the expectation over both sides, and using the known moments of $Z$ yields:
    \begin{equation*}
        \mathbb{E}[f(X)] = \mathbb{E}[f(Y+Z)] = \mathbb{E}[f(Y)] + \frac{\sigma^2 - \rho^2}{2} \mathbb{E}[f''(\xi)]
    \end{equation*}
    Using finally that $\|f''\|_\infty \leq 1$ concludes the proof.
\end{proof}
Given probability measures $P$ and $Q$ on $\reals^d$, the \textbf{maximum-sliced
(MS)} distance is defined as
\begin{align}
  d_{\mathrm{MS}}(P,Q) \coloneqq \sup_{\bv \, : \, \|\bv\| \le 1} d(\bv^\top P, \bv^\top Q) 
\end{align}
where $\bv^\top P$ denotes the one-dimensional distribution corresponding to
the projection of $P$ into the direction of $\bv$. It can be verified that
the MS distance is a metric~\cite{kolouri2019generalized} and that convergence
with respect to $d_{\mathrm{MS}}$ implies convergence in distribution as well as
convergence of second moments. For two multivariate normal distributions $P =
\mcN(0, \Sigma^\star)$ and $Q = \mcN(0, \hat{\Sigma})$, we have that
\begin{equation}
  \bv^\top \mathsf{P}^\star = \mcN(0, \bv^\top \Sigma^\star \bv) \quad \text{and} \quad  \bv^\top \hat{\mathsf{P}} = \mcN(0, \bv^\top \hat{\Sigma} \bv).
\end{equation}
Applying the result from \cref{eq:d_gaussian_1d} to these two distributions, we
find that the maximum-sliced distance between the two multivariate Gaussians is 
\begin{equation}\label{eq:MS_gen}
  \MS(\mathsf{P}^\star, \hat{\mathsf{P}}) = \sup_{\bv \, : \, \|\bv\| \le 1} \frac{1}{2}  |\bv^\top (\Sigma^\star - \hat{\Sigma})\bv | = \frac{1}{2} {\|\Sigma^\star - \hat{\Sigma}\|}_\op,
\end{equation}
where recall that ${\|A\|}_\op$ is the operator (spectral) norm of a matrix $A$, i.e.\ its largest
singular value.

\subsubsection{Exact computation of maximum-sliced distance and decay with $Q$}

We now compute the maximum-sliced distance between the empirical distribution
$\hat{\mathsf{P}}$ and the ground-truth distribution, where we again add a
``regularisation'', as in our analysis of the excess Kullback-Leibler
divergence. From \cref{eq:MS_gen}, we see that the value of $\sigma$ does not
influence the value of this ``excess'' MS distance. We thus have to compute the
spectral norm
\begin{equation}
  \Lambda = {\|\Delta\|}_\op, \quad \Delta \coloneqq \Sigma^\star - \hat{\Sigma} 
  = {\left( \Sigma^\star\right)}^{1/2} (\Id_d - W) {\left( \Sigma^\star\right)}^{1/2}
\end{equation}
where $\Sigma^\star = \Id+\beta \bu\bu^\T$, and $W$ is a Wishart matrix with aspect ratio $\gamma$.

\paragraph{The bulk contribution} To understand the behaviour of the bulk
eigenvalues of $\Delta$, we first note that ${\left( \Sigma^\star\right)}^{1/2}$
is a finite-rank perturbation of $\Id_d$, so the bulk law of $\Delta$ follows
the Marchenko-Pastur distribution under the variable shift $t \to 1 - t$. With
$\lambda_\pm(\gamma) \coloneqq (1 \pm \gamma^{-1/2})^2$ (see above), we therefore
have
\begin{equation}\label{eq:Lambda_bulk}
  \Lambda_{\mathrm{bulk}} = \max(|1 - \lambda_+|, |1 - \lambda_-|) = \frac{1}{\gamma} + \frac{2}{\sqrt \gamma}.
\end{equation}

\paragraph{The spike contribution} As we increase the sample complexity, an
eigenvalue will detach from the bulk in a phase transition analogous to the BBP
transition of the empirical covariance matrix $\hat{S}$, \cref{app:sec:bbp}. To
understand how this spike affects the spectral norm, we need to compute the
critical sample complexity at which the outlier appears, and its value. To this
end, we first compute the Stieltjes transform for $\Delta$. A standard
calculation (see for example sec. 3.3 of \citet{bandeira2025topics}) leads to
the following relation between the signal-to-noise ratio $\beta$ and the
Marchenko-Pastur density, 
\begin{equation}
  \frac{1}{\beta} = \int \mu^{\mathrm{MP}}_\gamma(\rd t) \frac{1 - t}{\lambda - (1 - t)}
\end{equation}
We can evaluate the integral by rewriting the integrand as
\begin{equation}
  \frac{1 - t}{\lambda - (1 - t)} = \frac{\lambda}{\lambda - (1 - t)} - 1 = \frac{\lambda}{t- (1 - \lambda)} - 1,
\end{equation}
which allows us to rewrite the integral in terms of the Stieltjes transform
of the Marchenko-Pastur distribution,
\begin{equation}
  g^{\mathrm{MP}}(z) = \int \mu^{\mathrm{MP}}_\gamma(\rd t) \frac{1}{t - z},
\end{equation}
as
\begin{equation}
  \frac{1}{\beta} = \lambda g^{\mathrm{MP}}(1 - \lambda) -1.
\end{equation}
Using the standard result for the Stieltjes of the Marchenko-Pastur (eq.~3.35 of
\citet{bandeira2025topics}), we have finally that any outlier $\lambda$ must satisfy:
\begin{equation}
  \label{app:eq:Delta_stieltjes}
\frac{1}{\beta }=\frac{\lambda  \left(\gamma\lambda - 1 + \gamma\sqrt{\left(1 -\lambda -\lambda
   _-\right) \left(1 -\lambda -\lambda _+\right)}\right)}{2 (1-\lambda)}-1.
\end{equation}
The spectral norm is given by the eigenvalue with the largest absolute value. In
a vanilla spiked Wishart matrix, the eigenvalue detaches from the bulk at the
right edge. 
We study first the left edge of the spectrum. Setting $\lambda = -2 \gamma^{-1/2} -
\gamma^{-1}$ in \cref{app:eq:Delta_stieltjes}, we find that the critical
$\beta_\Delta$ at which the transition occurs is given by
\begin{equation}
  \beta_\Delta = 1 + \gamma^{-\nicefrac{1}{2}}.
\end{equation}
The threshold $\beta_\Delta$ controls the separation of an eigenvalue from the
left edge of the bulk of $\Delta$ and should not be confused with the BBP
threshold $\beta_c = \gamma^{-1/2}$. Note that for $\beta < 1$, no eigenvalue
detaches for any value of $\gamma$. The (absolute) value of the corresponding
eigenvalue can be obtained by inverting \cref{app:eq:Delta_stieltjes} above
$\beta_\Delta$,
\begin{equation}\label{eq:Lambda_spiked}
  \Lambda_{\mathrm{spike}} = \frac{1 + \beta}{2 \beta \gamma} \left(1 + \sqrt{1 + 4 \beta \gamma}\right) \quad \text{for} \quad \beta > 1, \gamma > {(\beta - 1)}^{-2}.
\end{equation}
One can also show from \cref{app:eq:Delta_stieltjes} that a possible eigenvalue detaching from the right edge of the bulk is always smaller (in absolute value) than the maximum of \cref{eq:Lambda_bulk} and \cref{eq:Lambda_spiked}.

\paragraph{Final result and asymptotic expansion} Combining the above results,
we find that the maximum-sliced distance between ground-truth and learnt
distribution is given by
\begin{equation}
  \label{app:eq:dms-exact}
    \MS(\mathsf{P}^\star, \hat{\mathsf{P}}) = \frac{1}{2} {\|\Sigma^\star - \hat{\Sigma}\|}_\op = \frac{1}{2}  \begin{dcases}
    \Lambda_{\mathrm{bulk}} & \textrm{ if } \beta < 1 + \gamma^{-\nicefrac{1}{2}} \\
    \Lambda_{\mathrm{spike}} & \textrm{ if } \beta \ge 1 + \gamma^{-\nicefrac{1}{2}} \\
  \end{dcases}
\end{equation}
Once the spectral norm is given by $\Lambda_{\mathrm{spike}}$, i.e.\ for
$\beta \ge 1 + \gamma^{-\nicefrac{1}{2}}$, we can rewrite~$d_{\mathrm{MS}}$ in
terms of $Q$ using the relation between $\gamma$ and $Q$ from
\cref{eq:Q_delta}. It can be verified that $\MS\to 0$  as $Q \to 1$. An
asymptotic expansion around 0 for $Q = 1 - \epsilon$ with small $\epsilon$ then
yields the asymptotic decay of $\MS$,
\begin{equation}
  \label{app:eq:dms-asymptotic}
  \MS(\mathsf{P}^\star, \hat{\mathsf{P}}) = \frac{1}{2} \sqrt{\beta (1 + \beta)} \sqrt{1 - Q} + \mathcal{O}(1 - Q) \quad \mathrm{as}\quad Q\to1.
\end{equation}

\section{Details on numerical experiments with real images}%
\label{app:sec:experiments}

\subsection{Datasets}%
\label{app:sec:datasets}

We performed our experiments with real images on two data sets: square patches
of width $w\in\{32, 48, 64\}$ sampled randomly from ImageNet images, and the
CelebA database of celebrity images~\citep{liu2015faceattributes}, downsampled
to width $w\in\{32, 48, 64\}$. We converted images from both datasets to
black-and-white  by taking the weighted sum of the colour channels using the
standard formula that takes the relative variance of the channels into account.
Beyond this, we applied the following preprocessing steps:

\begin{description}
\item[ImageNet] To sample the patches, we first selected $n$ images from the
training set of ImageNet at random. In each image, we chose a patch at random,
and after conversion to black-and-white, we centered the images by removing the
spatial mean. Patches with negligible contrast (standard deviation below
$10^{-1}$) were discarded, as they correspond to essentially a constant colour
across the whole patch; this typically removed between $0.1\%$ and $0.5\%$ of all patches.
\item[CelebA] We first center-cropped the images down to squares of width 178
pixels, and then resized the images to the final width $w$. Images were
converted to grayscale and normalised to the range $[-1, 1]$.
\end{description}

\begin{figure*}[t!]
  \centering
  \includegraphics[width=.4\linewidth]{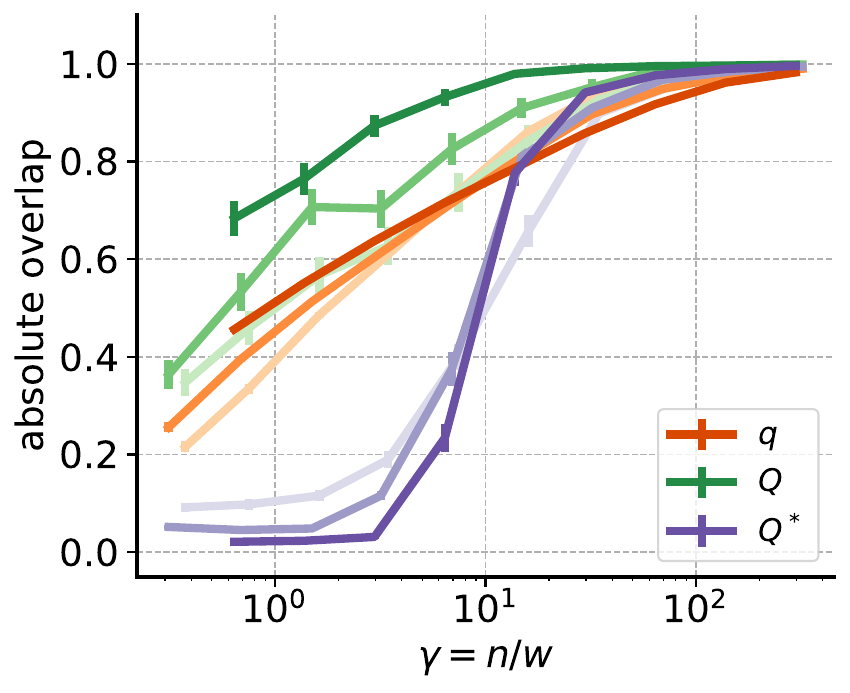}
  \caption{\label{fig:imagenet} \textbf{A rotated subspace overlap identifies a
  phase transition on ImageNet, too} We reproduce the results obtained on CelebA
  in \cref{fig:powerlaw} using random patches from ImageNet as data. We plot the 
  convergence overlap $q$ and subspace overlap $Q$, \cref{eq:q,eq:Q}, for a
  linear model fitted to random patches from ImageNet as a function of sample
  complexity $\gamma = n / w$, where $w \in \{16, 32, 64\}$ is the width of the
  patches (increasingly dark lines). We also plot the rotated subspace overlap
  $Q^\star$ that factors out the power-law decay of the spectrum, \cref{eq:Qstar}.}
\end{figure*}

\subsection{\Cref{fig:powerlaw,fig:imagenet}}

We computed the three order parameters $q, Q$ and $Q^\star$ for the patches at each
width as follows:
\begin{description}
\item[Computing $q$, $Q$] We estimated the empirical covariance from a subset of
$n$ image patches, and then computed $q$ and $Q$ as in the Gaussian case
directly from the definitions \cref{eq:m,eq:q}.
\item[Computing $Q^\star$] We first took the 2D Fourier transform of the image
patches. We then flatten the (complex) coefficients into a vector, normalise by
the coefficient for each mode, and compute the leading eigenvectors of the
correlation matrix for both subsets. The absolute value of their overlap yields
$Q^\star$.
\end{description}

\subsection{\Cref{fig:image_correlations}: Spectra of correlation matrices of
real images}

For each width $w\in\{32, 48, 64\}$, we kept $n = 4w$ images / patches of CelebA
and ImageNet, respectively, and computed their
two-dimensional Fourier transform. The resulting Fourier coefficients were
reshaped into vectors and centered across samples by subtracting the empirical
mean over the data set at each frequency. Each frequency component was then
normalized to unit empirical variance, yielding a standardized, complex data
matrix $Z$. We then computed the eigenvalues of the empirical correlation matrix
of $Z$. In \cref{fig:image_correlations}, the first three plots show the
histogram density of the rescaled eigenvalues $\lambda / w$ on a log-scale. This
representation ensures that the bulk shape is comparable across widths; note
that the four spikes that appear at all three widths correspond to a single
eigenvalue, although the magnitude of the spikes decreases as $w$ increases. The
right plot shows the ten largest eigenvalues (in descending order), also
rescaled by $w$, as a function of their rank.

\subsection{\Cref{fig:powerlaw_snr0}: validating $Q^\star$ in the absence of
correlations in $F$-space}%
\label{app:sec:powerlaw_snr0}

In the synthetic power-law model, we can easily destroy the correlations in
$F$-space by setting $\beta = 0$. In images, we can replicate this effect by
multiplying each Fourier coefficient of each image with a random phase as
follows. Working for simplicity with one-dimensional inputs, we write $X_k = A_k
e^{i \phi_k}$ for the~$k$th Fourier coefficient of a given input. The
correlation matrix is then
\begin{equation}
  \rho_{k\ell} = \frac{\Sigma_{k\ell}}{\sqrt{\EE {|X_k|}^2 \EE {|X_\ell|}^2}}
\end{equation}
where
\begin{equation}
  \Sigma_{k\ell} = \EE X_k \overline{X_\ell} = \EE \left[ A_k A_l e^{i(\phi_k - \phi_\ell)}\right]
\end{equation}
and the expectation is taken over the inputs in a data set. Now introducing a
random phase shift means that for every input and every $k$, we instead consider
\begin{equation}
  Y_k = e^{i \theta_k}, \qquad \theta_k \iid \mathrm{unif}[-\pi, \pi].
\end{equation}
Since the phases are independent across images and components, it is immediate
that the correlations $\rho{k \ell} = 0$ for $k \neq \ell$, while the power-law
decay of the amplitudes $\Sigma_{kk} = {|A_k|}^2$ is unchanged, mirroring the
situation when $\beta = 0$ in the power-law model.

\begin{figure*}[t!]
  \centering
  \includegraphics[width=.7\linewidth]{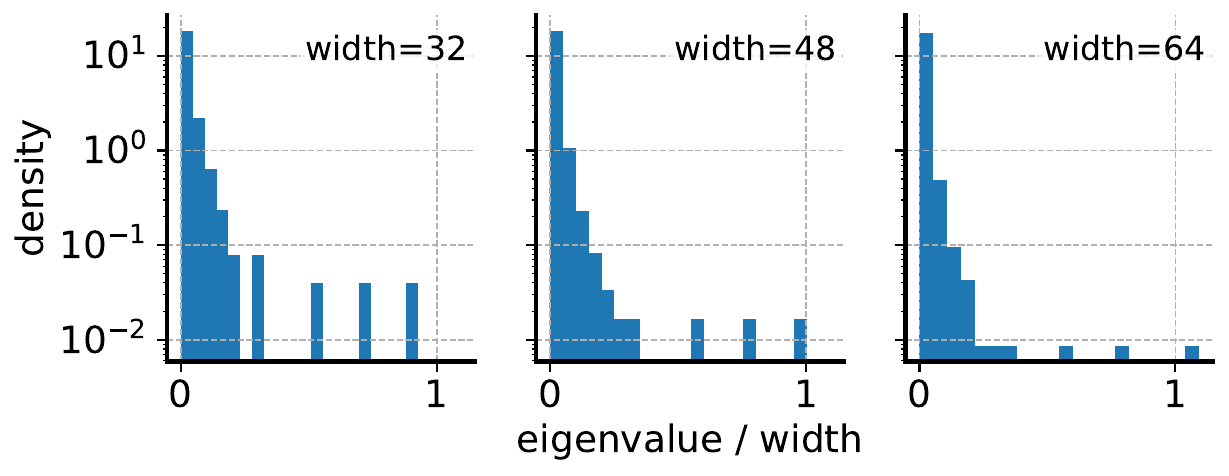}%
  \includegraphics[width=.28\linewidth]{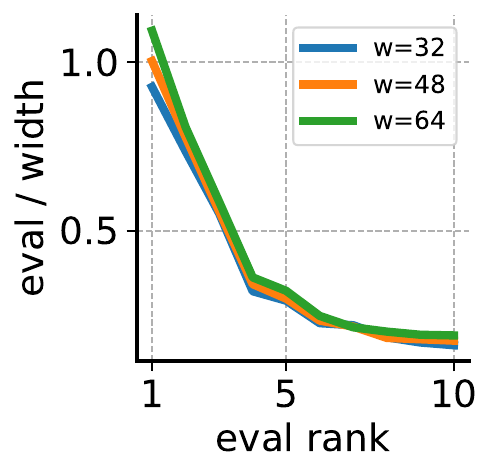}\\
  \includegraphics[width=.7\linewidth]{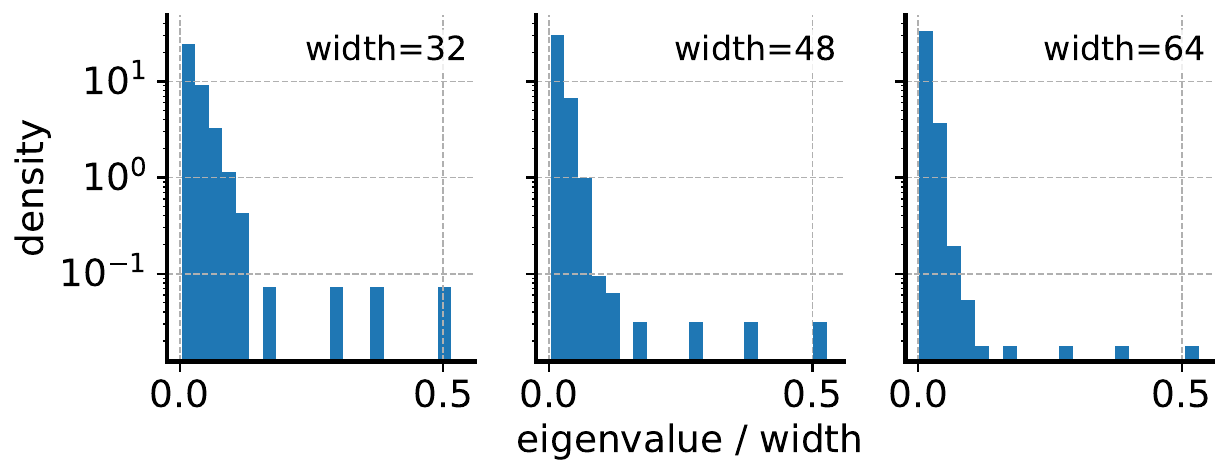}%
  \includegraphics[width=.28\linewidth]{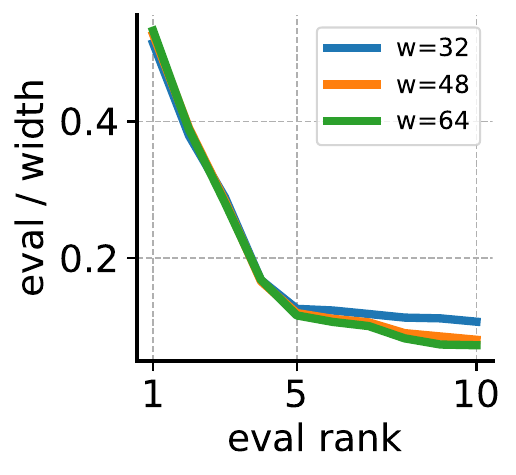}
  \caption{\label{fig:image_correlations} \textbf{The correlation matrices of
  CelebA and ImageNet in Fourier space is low-rank.} We show results for CelebA
  and ImageNet in the top and bottom rows, respectively. \emph{Left}: The three
  histograms show the eigenvalues of the correlation matrix in Fourier space of
  patches of CelebA images and ImageNet patches obtained using the procedure
  described in \cref{app:sec:datasets} for patch widths of $w=32, 48, 64$
  pixels. All spectra are computed from $n=4 w^2$ samples. \emph{Right}: The ten
  leading eigenvalues of the three spectra show a very similar trend when
  rescaled by width.}
\end{figure*}

\begin{figure*}[t!]
  \centering
  \includegraphics[width=.5\linewidth]{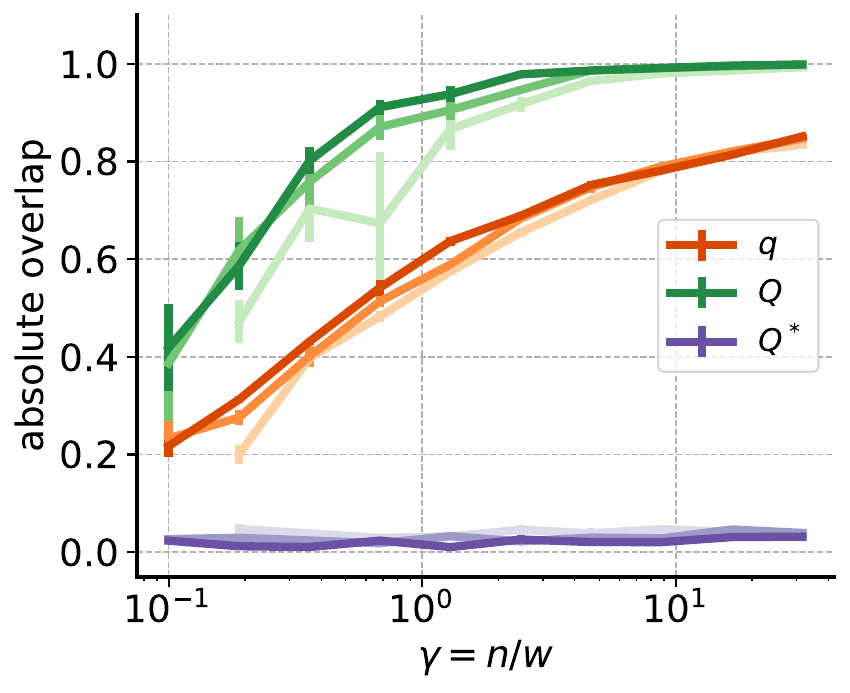}%
  \includegraphics[width=.5\linewidth]{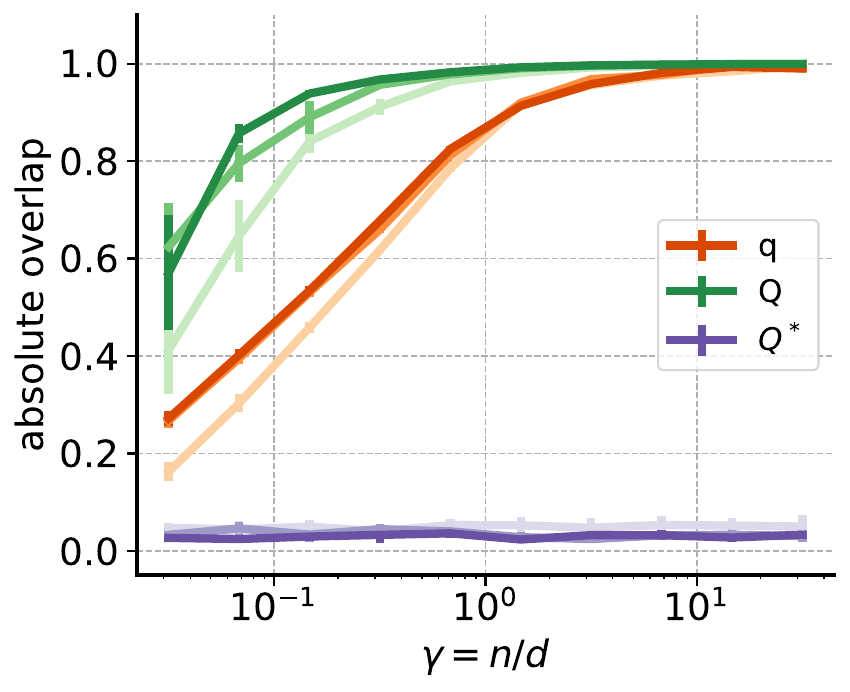}%
  \caption{\label{fig:powerlaw_snr0} \textbf{The rotated subspace overlap $Q^\star$
  remains close to zero if the correlation matrix lacks low-rank structure.}
  \emph{Left:} Same as \cref{fig:powerlaw}(b): we compute convergence $q$,
  subspace overlap $Q$ and subspace overlap $Q^\star$ for CelebA patches, except
  that we now randomly multiplied each Fourier coefficient of each image with a
  random phase shift, see \cref{app:sec:powerlaw_snr0}.  \emph{Right:} Same
  experiment as in \cref{fig:powerlaw}(c) with the synthetic power-law model of
  \cref{eq:powerlaw-model}, except that the correlation matrix does not have
  low-rank structure, i.e.\ $\beta=0$.}
\end{figure*}

\section{Diffusion experiment}%
\label{app:sec:diffusion}

We reproduced the experiment of \citet{kadkhodaie2024generalization} by training
a set of diffusion models independently on disjoint subsets of the CelebA data
set, preprocessed as in \cref{app:sec:datasets}.

\paragraph{Denoiser architecture} We trained a standard encoder-decoder U-Net to
predict the added noise in images. The encoder consists of an initial
double-convolution followed by four downsampling stages with GroupNorm and ReLU
activations; channel width doubles after each downsampling step. The decoder
mirrors this structure with transposed convolutions for upsampling, skip
connections from the corresponding encoder resolutions, concatenation, and
another double-convolution block to fuse features. Diffusion time is injected
throughout the network using a fixed sinusoidal embedding table.

\paragraph{Denoising diffusion probabilistic model (DDPM)} We used the UNet as
the denoiser in a standard DDPM model à la \citet{ho2020denoising} with the
variance-preserving formulation: a linear $\beta_t$ schedule defines $\alpha_t =
1- \beta_t$ and the cumulative products $\bar{\alpha}_t$, which are used in the
forward noising process $x_t = \sqrt{\bar{\alpha}_t} x_0 + \sqrt{1 -
\bar{\alpha}_t} \epsilon$, where $x_0$ is an image and $\epsilon$ is a noise
vector sampled from the standard multivariate normal distribution.

\paragraph{Training} We trained all models by minimising the mean-squared error
in denoising images corrupted by i.i.d.\ Gaussian noise with identity variance.
We trained each model for about 200k steps at mini-batch size 512, which
corresponds to 1000 epochs on the largest data set we considered.

\paragraph{Key differences to \citet{kadkhodaie2024generalization}} We used a
standard U-Net, while they used a ``bias-free'' denoiser and did not provide the
noise level to the denoiser. We sampled using a standard DDPM sampling
procedure, while they sampled using the algorithm of \citet{kadkhodaie2020solving}.

\begin{figure*}[t!]
  \centering
  \includegraphics[width=\linewidth]{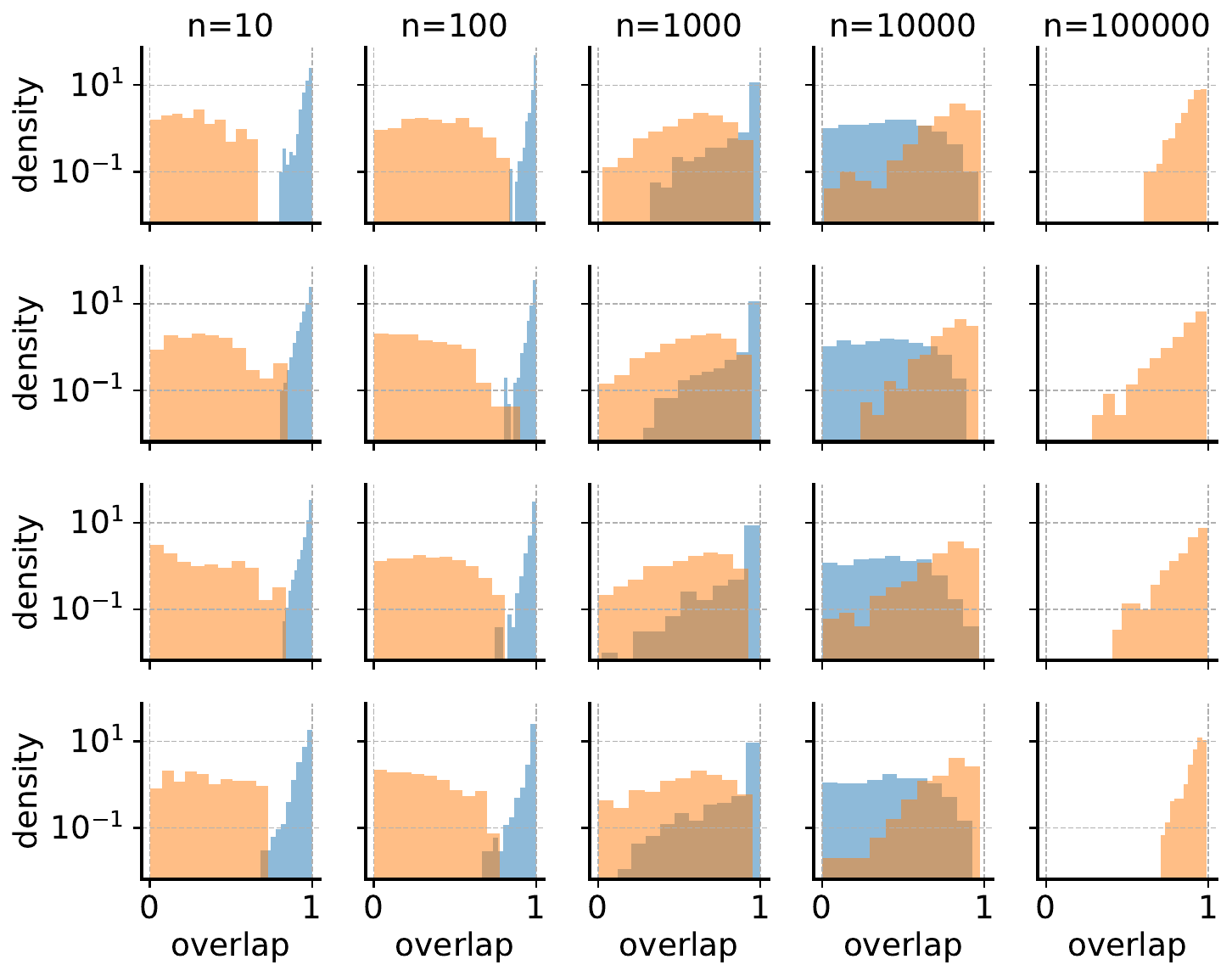}
  \caption{\label{fig:diffusion_m_and_q} \textbf{The transition from
   memorisation to generalisation in denoising diffusion probabilistic models.}
   We reproduce the results of \citet{kadkhodaie2024generalization}, and in
   particular their Figure 2, in our setup described in
   \cref{app:sec:diffusion}. We plot the histograms of the memorization overlap $m$ (blue) and convergence overlap $q$ (orange)
   as a function of sample complexity (columns), for four different repetitions
   of the experiment (rows).}
\end{figure*}